\title[Generalization Error Bounds for SGLD via Maximal Leakage]{Generalization Error Bounds for Noisy, Iterative Algorithms via Maximal Leakage}
\def\Pr{{\rm \mathbf {Pr}}}
\def\E{{\rm \mathbf  E}}
\def\var{{\rm \mathbf {var}}}
\newcommand{\Z}{\mathcal{Z}}
\newcommand{\B}{\mathcal{B}}
\newcommand{\R}{\mathbb{R}}
\newcommand{\W}{\mathcal{W}}
\newcommand{\dd}{\mathrm{d}}
\newcommand{\ml}[2]{\mathcal{L}\left(#1  \!\!  \to  \!\!   #2\right)} 
\newcommand{\cml}[3]{\mathcal{L}\left(#1  \!\!  \to  \!\!   #2 | #3\right)} 
\newcommand{\esssup}
{\operatornamewithlimits{ess-sup}}
\newcommand{\ds}{\displaystyle}
\newcommand{\defeq}{\vcentcolon=}
\begin{document}

\maketitle

\begin{abstract}
    We adopt an information-theoretic framework to analyze the generalization behavior of the class of iterative, noisy learning algorithms. This class is particularly suitable for study under information-theoretic metrics as the algorithms are inherently randomized, and it includes  commonly used algorithms such as Stochastic Gradient Langevin Dynamics (SGLD). Herein, we use the maximal leakage (equivalently, the Sibson mutual information of order infinity) metric, as it is simple to analyze, and it implies both bounds on the probability of having a large generalization error and on its expected value. We show that, if the update function (e.g., gradient) is bounded in $L_2$-norm and the additive noise is isotropic Gaussian noise, then one can obtain an upper-bound on maximal leakage in semi-closed form. Furthermore, we demonstrate how the assumptions on the update function affect the optimal (in the sense of minimizing the induced maximal leakage) choice of the noise. Finally, we compute explicit tight upper bounds on the induced maximal leakage for other scenarios of interest. 
    \end{abstract}

\begin{keywords}
Noisy iterative algorithms, generalization error, maximal leakage, Gaussian noise
\end{keywords}

\section{Introduction}

One of the key challenges in machine learning research concerns the ``generalization'' behavior of learning algorithms. That is: if a learning algorithm performs well on the training set, what guarantees can one provide on its performance on new samples?

While the question of generalization is understood in many settings \citep{introToLearning,learningBook}, existing bounds and techniques provide vacuous expressions when employed to show the generalization capabilities of deep neural networks (DNNs) \citep{normsNN,vcDimNN, genErrDL1,genErrDL2}. In general,
classical measures of model expressivity (such as Vapnik-Chervonenkis
(VC) dimension \citep{vcDim}, Rademacher complexity \citep{rademacherComplexity}, etc.) fail to explain the generalization abilities
of DNNs due to the fact that they are typically over-parameterized models with less
training data than model parameters. 
A novel approach was introduced by \citep{explBiasMI}, and \citep{infoThGenAn} (further developed by \cite{conditionalMI,tighteningMI,fullVersionGeneralization,tpcGenerr} and many others), where information-theoretic techniques are used to link the generalization capabilities of a learning algorithm to information measures. These quantities are algorithm-dependent and can be used to analyze the generalization capabilities of general classes of updates and models \textit{e.g.}, noisy iterative algorithms such as the Stochastic Gradient Langevin Dynamics (SGLD) \citep{genErrMISGLD, optimalNoiseSGLD}, which can thus be applied to deep learning settings. Moreover, it has been shown that information-theoretic bounds can be non-vacuous and reflect the real generalization behavior even in deep
learning settings \citep{genErrNoisyIterative,nonvacuousPAC, genErrSGDLDataDependent,genErrNoisyIterativeCMI}.

In this work we adopt and expand the framework introduced by \cite{genErrMISGLD}, but instead of focusing on the mutual information between the input and output of an iterative algorithm, we compute the maximal leakage \citep{leakageLong}.
Maximal leakage, together with other information measures of the Sibson/R\'enyi family (maximal leakage can be shown to be Sibson Mutual information of order infinity \citep{leakageLong}), have been linked to high-probability bounds on the generalization error \citep{fullVersionGeneralization}. In particular, given a learning algorithm $\mathcal{A}$ trained on data-set $S$ (made of $n$ samples), one can provide the following guarantee in the case of the $0-1$ loss:
\begin{equation}
    \Pr(|\text{gen-err}(\mathcal{A},S)|\geq \eta) \leq 2\exp(-2n\eta^2 + \ml{S}{\mathcal{A}(S)}),
\end{equation}
where $\ml{S}{\mathcal{A}(S)}$ is defined in equation~\eqref{eq:Iinf_MaxLeak} below.
This deviates from much of the literature in which the focus is on bounding the \textbf{expected} generalization error instead \citep{infoThGenAn,conditionalMI}. 
Consequently, if one can guarantee that for a class of algorithms, the maximal leakage between the input and the output is bounded, then one can provide an \textbf{exponentially decaying} (in the number of samples $n$) bound on the probability of having a large generalization error. This is in general not true for mutual information, which can typically only guarantee a linearly decaying bound on the probability of the same event \citep{learningMI}. Moreover, a bound on maximal leakage implies a bound on mutual information (cf.~\Cref{eq:sibsonMI-increasing}) and, consequently, a bound on the expected generalization error of $\mathcal{A}$ (exploiting the link between mutual information and expected generalization error~\citep{infoThGenAn}). The main advantage of maximal leakage lies in the fact that it depends on the distribution of the samples only through its support. It is thus naturally independent from the distribution over the samples and particularly amenable to analysis, especially in additive noise settings.

The contributions of this work can be summarized as follows:
\begin{itemize}
    \item we derive novel bounds on $\ml{S}{\mathcal{A}(S)}$ whenever $\mathcal{A}$ is a noisy, iterative algorithm (SGLD-like), which then implies the first bounds showing generalization with high-probability of said mechanisms;
  \item we leverage the analysis to extrapolate to optimize the type of noise to be added (in the sense of minimizing the induced maximal leakage), based on the assumptions imposed on the algorithm. In particular, 
      if one assumes the $L_\infty$ norm of the gradient to be bounded, then adding uniform noise minimizes the maximal leakage upper bound.
  Hence, the analysis and computation of maximal leakage can \emph{also} be used to inform the design of novel noisy, iterative algorithms.
\end{itemize}

\subsection{Related Work}
The line of work exploiting information measures to bound the expected generalization started in~\citep{explBiasMI,infoThGenAn} and was then refined with a variety of approaches considering Conditional Mutual Information~\citep{conditionalMI,genErrNoisyIterativeCMI}, the Mutual Information between individual samples and the hypothesis~\citep{ismi} or improved versions of the original bounds~\citep{ISIT2019,improvingMICondProc}. Other approaches employed the Kullback-Leibler Divergence with a PAC-Bayesian approach~\citep{pacbayes,nonvacuousPAC}. 
Moreover, said bounds were then characterized for specific SGLD-like algorithms, denoted as ``noisy, iterative algorithms'' and used to provide novel, non-vacuous bounds for Neural Networks~\citep{genErrMISGLD,genErrSGDLDataDependent,genErrNoisyIterativeCMI, wang2023generalization} as well as for SGD algorithms~\citep{sgdBoundInfoTh}. Recent efforts tried to provide the optimal type of noise to add in said algorithms and reduce the (empirical) gap in performance between SGLD and SGD~\citep{optimalNoiseSGLD}.
All of these approaches considered the KL-Divergence or (variants of) Shannon's Mutual Information. General bounds on the expected generalization error leveraging arbitrary divergences were given in~\citep{tpcGenerr,convexAnalysisGenErr}.
Another line of work considered instead bounds on the probability of having a large generalization error~\citep{learningMI,fullVersionGeneralization,conditionalSibsonErrorBounds} and focused on large families of divergences and generalizations of the Mutual Information (in particular of the Sibson/R\'enyi-family, including conditional versions).

\section{Preliminaries, Setup, and a General Bound}
\subsection{Preliminaries}
\subsubsection{Information Measures}
The main building block of the information measures considered in this work is the R\'enyi's $\alpha$-divergence between two measures $P$ and $Q$, $D_\alpha(P\|Q)$ (which can be seen as a parametrized generalization of the Kullback Leibler-divergence)~\cite[Definition 2]{RenyiKLDiv}. Starting from R\'enyi's Divergence and the geometric averaging that it involves, Sibson built the notion of Information Radius~\citep{infoRadius} which can be seen as a special case of the following quantity~\citep{verduAlpha}: $
    I_\alpha(X,Y) = \min_{Q_Y} D_\alpha(P_{XY}\|P_{X}Q_Y). $
Sibson's $I_\alpha(X,Y)$ represents a generalization of Shannon's mutual information, indeed one has that: $
\lim_{\alpha\to 1}I_\alpha(X,Y)=I(X;Y) = \mathbb{E}_{P_{XY}}\left[\log\left(\frac{dP_{XY}}{dP_XP_Y}\right)\right].$
Differently, when $\alpha\to\infty$, one gets: \begin{equation} \label{eq:Iinf_MaxLeak}
I_\infty(X,Y)=\log\mathbb{E}_{P_Y}\left[ \esssup_{P_X} \frac{dP_{XY}}{dP_XP_Y} \right] =\ml{X}{Y},
\end{equation} where $\ml{X}{Y}$ denotes the maximal leakage from $X$ to $Y$, a recently defined information measure with an operational meaning in the context of privacy and security~\citep{leakageLong}. Maximal leakage represents the main quantity of interest for the scope of this paper, as  it is amenable to analysis and has been used to bound the generalization error~\citep{fullVersionGeneralization}. As such, we will bound the maximal leakage between the input and output of generic noisy iterative algorithms.

To that end, we mention a few useful properties of $\ml{X}{Y}$. If $X$ and $Y$ are jointly continuous random variables, then~\citep[Corollary 4]{leakageLong}
\begin{align}
    \ml{X}{Y} = \log \int \esssup_{P_X} f_{Y|X} (y|x) dy,
\end{align}
where $f_{Y|X}$ is the conditional pdf of $Y$ given $X$. Moreover, maximal leakage satisfies the following chain rule (the proof of which is given in Appendix~\ref{app:lem-chain-rule}):
\begin{lemma} \label{lem:chain-rule-maxLeak}
    Given a triple of random variables $(X,Y_1,Y_2)$, then
    \begin{align}
    \label{eq:lem-chain-rule}
    \ml{X}{Y_1, Y_2} \leq \ml{X}{Y_1} + \cml{X}{Y_2}{Y_1},
\end{align}
where the conditional maximal leakage $\cml{X}{Y_2}{Y_1}  = \esssup_{P_{Y_1}} \cml{X}{Y_2}{Y_1=y_1}$, where the latter term is interpreted as the maximal leakage from $X$ to $Y_2$ with respect to the distribution $P_{XY_2 | Y_1 = y_1}$. Consequently, for random variables $(X, (Y_i)_{i=1}^n)$,
\begin{align} \label{eq:lem:chain-rule-general}
    \ml{X}{Y^n} \leq \sum_{i=1}^n \cml{X}{Y_i}{Y^{i-1}}.
\end{align}
\end{lemma}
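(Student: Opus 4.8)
The plan is to prove the two-variable bound~\eqref{eq:lem-chain-rule} and then deduce~\eqref{eq:lem:chain-rule-general} by induction on $n$. I would work in the jointly continuous case, using $\ml{X}{Y}=\log\int\esssup_{P_X}f_{Y|X}(y|x)\,\dd y$ together with the conditional analogue $\cml{X}{Y_2}{Y_1=y_1}=\log\int\esssup_{P_{X|Y_1=y_1}}f_{Y_2|Y_1X}(y_2|y_1,x)\,\dd y_2$; the general case should follow verbatim with densities replaced by the relevant Radon--Nikodym derivatives. Everything is driven by the disintegration $f_{Y_1Y_2|X}=f_{Y_1|X}\,f_{Y_2|Y_1X}$, and the task is to push the essential supremum over $P_X$ and the integral over $y_2$ through this product.

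The key step I would isolate is the pointwise sup inequality: for every $y_1$ with $f_{Y_1}(y_1)>0$ and every $y_2$,
\begin{equation*}
\esssup_{P_X}\!\bigl[f_{Y_1|X}\,f_{Y_2|Y_1X}\bigr]\ \le\ \bigl(\esssup_{P_X}f_{Y_1|X}\bigr)\,\bigl(\esssup_{P_{X|Y_1=y_1}}f_{Y_2|Y_1X}\bigr),
\end{equation*}
where the densities are evaluated at $(y_1|x)$ and $(y_2|y_1,x)$. The subtlety---which I expect to be the main obstacle---is that the second factor on the right must be an essential supremum with respect to $P_{X|Y_1=y_1}$, \emph{not} $P_X$; otherwise the \emph{conditional} leakage cannot appear. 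This is legitimate because the Radon--Nikodym derivative $\dd P_{X|Y_1=y_1}/\dd P_X$ equals $f_{Y_1|X}(y_1|\cdot)/f_{Y_1}(y_1)$: wherever $f_{Y_2|Y_1X}(y_2|y_1,\cdot)$ exceeds its $P_{X|Y_1=y_1}$-essential supremum---a $P_{X|Y_1=y_1}$-null event---the factor $f_{Y_1|X}(y_1|\cdot)$ vanishes $P_X$-a.e., so the product on the left is zero there; everywhere else the product is at most the right-hand side.

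Granting this inequality, I would integrate over $y_2$ (Tonelli, all integrands nonnegative) to get $\int\esssup_{P_X}f_{Y_1Y_2|X}(y_1,y_2|x)\,\dd y_2\le(\esssup_{P_X}f_{Y_1|X}(y_1|x))\,e^{\cml{X}{Y_2}{Y_1=y_1}}$, then bound $e^{\cml{X}{Y_2}{Y_1=y_1}}\le e^{\cml{X}{Y_2}{Y_1}}$ via $\cml{X}{Y_2}{Y_1}=\esssup_{P_{Y_1}}\cml{X}{Y_2}{Y_1=y_1}$, which dominates $\cml{X}{Y_2}{Y_1=y_1}$ for $P_{Y_1}$-a.e.\ $y_1$ and hence for Lebesgue-a.e.\ $y_1$ in $\{f_{Y_1}>0\}$ (the left side being $0$ where $f_{Y_1}(y_1)=0$). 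Integrating over $y_1$ yields $e^{\ml{X}{Y_1Y_2}}\le e^{\cml{X}{Y_2}{Y_1}}\int\esssup_{P_X}f_{Y_1|X}(y_1|x)\,\dd y_1=e^{\cml{X}{Y_2}{Y_1}}e^{\ml{X}{Y_1}}$, and taking logarithms gives~\eqref{eq:lem-chain-rule}.

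For~\eqref{eq:lem:chain-rule-general} I would induct on $n$: the base case $n=1$ is the identity $\ml{X}{Y_1}=\cml{X}{Y_1}{Y^0}$ (conditioning on the empty tuple), and the inductive step applies~\eqref{eq:lem-chain-rule} with the tuple $Y^{n-1}$ in the role of $Y_1$ and $Y_n$ in the role of $Y_2$, giving $\ml{X}{Y^n}\le\ml{X}{Y^{n-1}}+\cml{X}{Y_n}{Y^{n-1}}$, followed by the induction hypothesis on $\ml{X}{Y^{n-1}}$. Apart from the pointwise inequality above, every step is routine.
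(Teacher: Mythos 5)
Your proof is correct, but it follows a genuinely different route from the paper's. The paper works at the level of the operational definitions: it writes $\ml{X}{Y_1,Y_2}$ as a supremum over Markov chains $U-X-(Y_1,Y_2)$ of the MAP-guessing ratio $\Pr(\hat U(Y_1,Y_2)=U)/\max_u P_U(u)$, factors that ratio through $\Pr(\hat U(Y_1)=U)$, bounds the supremum of the product by the product of suprema, and relaxes the Markov constraints to identify the two factors as $\cml{X}{Y_2}{Y_1}$ and $\ml{X}{Y_1}$; the identification $\cml{X}{Y_2}{Y_1}=\esssup_{P_{Y_1}}\cml{X}{Y_2}{Y_1=y_1}$ is imported from Theorem 6 of the maximal-leakage paper (with a remark on extending it to continuous alphabets). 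You instead take the density representation $\ml{X}{Y}=\log\int\esssup_{P_X}f_{Y|X}\,\dd y$ as the starting point and prove the sup-splitting inequality pointwise in $(y_1,y_2)$, using $\dd P_{X|Y_1=y_1}/\dd P_X=f_{Y_1|X}(y_1|\cdot)/f_{Y_1}(y_1)$ to justify that the second factor may be an essential supremum under $P_{X|Y_1=y_1}$ rather than $P_X$ --- that is exactly the right observation, and your handling of the sets $\{f_{Y_2|Y_1X}>\esssup\}$ and $\{f_{Y_1}=0\}$ is sound. Your induction also runs in the opposite direction (peeling off $Y_n$ with $Y^{n-1}$ in the role of $Y_1$), which is slightly cleaner than the paper's version, where $Y_1$ is peeled off first and the essential supremum must then be pushed through the sum of conditional leakages. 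The trade-off: the paper's argument is alphabet-agnostic and stays at the level of the operational definition, while yours is self-contained precisely in the absolutely continuous setting in which the lemma is later applied, but it presupposes the density formula and conditional densities, and your claim that the general case follows ``verbatim'' with Radon--Nikodym derivatives is asserted rather than proved; a careful general statement would need the appropriate reference measure and the general characterization of (conditional) maximal leakage.
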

Moreover, one can relate $\ml{X}{Y}$ to $I(X;Y)$ through $I_\alpha$. Indeed, an important property of $I_\alpha$ is that it is non-decreasing in $\alpha$, hence for every $\infty>\alpha>1$:
\begin{equation} \label{eq:sibsonMI-increasing}
    I(X;Y) = I_1(X,Y) \leq I_\alpha(X,Y) \leq I_\infty(X,Y) = \ml{X}{Y}. 
\end{equation}
For more details on Sibson's $\alpha$-MI we refer the reader to~\citep{verduAlpha}, as for maximal leakage the reader is referred to~\citep{leakageLong}.

\subsubsection{Learning Setting}

Let $\Z$ be the sample space, $\W$ be the hypothesis space, and $\ell : \W \times \Z \rightarrow \mathbb{R}_+$ be a loss function. Say $\W \subseteq \mathbb{R}^d$. Let $S=(Z_1, Z_2, \ldots, Z_n)$ consist of $n$ i.i.d samples, where $Z_i \sim P$, with $P$ unknown. 
A learning algorithm $\mathcal{A}$ is a mapping $\mathcal{A}:\Z^n\to \W$ that given a sample $S$ provides a hypothesis $W=\mathcal{A}(S)$. $\mathcal{A}$ can be either a deterministic or a randomized mapping and undertaking a probabilistic (and information-theoretic) approach one can then equivalently consider $\mathcal{A}$ as a family of conditional probability distributions $P_{W|S=s}$ for $s\in \Z^n$ \textit{i.e.}, an information channel. Given a hypothesis $w\in\W$ the true risk of $w$ is denoted as follows:
\begin{equation}
    L_{P_Z}(w) = \mathbb{E}_{P}[\ell(w,Z)]
\end{equation}
while the empirical risk of $w$ on $S$ is denoted as follows:
\begin{equation}
    L_{S}(w) = \frac1n\sum_{i=1}^n \ell(w,Z_i).
\end{equation}
Given a learning algorithm $\mathcal{A}$, one can then define its generalization error as follows:
\begin{equation}
\text{gen-err}_\mathcal{P}(\mathcal{A},S)=L_{\mathcal{P}}(\mathcal{A}(S))-L_{S}(\mathcal{A}(S)).
\end{equation}
Since both $S$ and $\mathcal{A}$ can be random, $\text{gen-err}_\mathcal{P}(\mathcal{A},S)$ is a random variable and one can then study its expected value or its behavior in probability. Bounds on the expected value of the generalization error in terms of information measures are given in~\cite{infoThGenAn,ISIT2019,ismi,conditionalMI} stating different variants of the following bound~\citep[Theorem 1]{infoThGenAn}:
if $\ell(w,Z)$ is $\sigma^2$-sub-Gaussian\footnote{A $0$-mean random variable $X$ is said to be $\sigma^2$-sub-Gaussian if $\log\mathbb{E}[\exp(\lambda X)]\leq \sigma^2\lambda^2/2$ for every $\lambda\in\mathbb{R}$.} then
\begin{equation} \label{eq:gen-bound-MI}
    \left|\mathbb{E}[\text{gen-err}_\mathcal{P}(\mathcal{A},S)] \right| \leq \sqrt{\frac{2\sigma^2 I(S;\mathcal{A}(S))}{n}}.
\end{equation}
Thus, if one can prove that the mutual information between the input and output of a learning algorithm $\mathcal{A}$ trained on $S$ is bounded (ideally, growing less than linearly in $n$) then the expected generalization error of $\mathcal{A}$ will vanish with the number of samples.
Alternatively,~\cite{fullVersionGeneralization} demonstrate high-probability bounds, involving different families of information measures. One such bound, which is relevant to the scope of this paper is the following  
\citep[Corollary 2]{fullVersionGeneralization}: assume $\ell(w,Z)$ is $\sigma^2$-sub-Gaussian and let $\alpha>1$, then
\begin{equation}
    \Pr(|\text{gen-err}_P(\mathcal{A},S)|\geq t) \leq 2\exp\left(-\frac{\alpha-1}{\alpha}\left(\frac{nt^2}{2\sigma^2}- I_\alpha(S,\mathcal{A}(S))\right) \right) \label{eq:genErrProbBound},
\end{equation}
taking the limit of $\alpha\to\infty$ in~\eqref{eq:genErrProbBound} leads to the following~\cite[Corollary 4]{fullVersionGeneralization}:
\begin{equation}
    \Pr(|\text{gen-err}_P(\mathcal{A},S)|\geq t) \leq 2\exp\left(-\left(\frac{nt^2}{2\sigma^2}- \ml{S}{\mathcal{A}(S)}\right)\right) \label{eq:genErrProbBoundMaxLeak}.
\end{equation}
Thus, in this case, if one can prove that the maximal leakage between the input and output of a learning algorithm $\mathcal{A}$ trained on $S$ is bounded, then the \textbf{probability} of the generalization error of $\mathcal{A}$ being larger than any constant $t$ will decay \textbf{exponentially fast} in the number of samples $n$.
\subsection{Problem Setup}
We consider iterative algorithms, where each update is of the following form:
\begin{align} 
W_t = g(W_{t-1}) - \eta_t F(W_{t-1},Z_t) + \xi_t, ~ \forall ~ t \geq 1, \label{eq:iteration1}
\end{align}
where $Z_t \subseteq S$ (sampled according to some distribution), $g: \mathbb{R}^{d} \rightarrow \mathbb{R}^{d}$ is a deterministic function, $F(W_{t-1},Z_t)$ computes a direction (e.g., gradient), $\eta_t$ is the step-size, and $\xi_t =( \xi_{t1}, \ldots, \xi_{td})$ is random noise. We will assume for the remainder of this paper that $\xi_t$ has an absolutely continuous distribution.
Let $T$ denote the total number of iterations, $W^t = (W_1, W_2, \ldots W_t)$, and $Z^t = (Z_1, Z_2, \ldots, Z_t)$. The algorithms under consideration further satisfy the following two assumptions
\begin{itemize}
\item {\bf Assumption 1 (Sampling):} The sampling strategy is agnostic to parameter vectors: 
\begin{align}
P(Z_{t+1}|Z^t, W^t, S) = P(Z_{t+1} |Z^t, S).
\end{align}
\item {\bf Assumption 2 ($\mathbf{L_p}$-Boundedness):} For some $p>0$ and $L>0$, \,$\sup_{w,z} \| F(w,z) \|_p \leq L$.
\end{itemize}
As a consequence of the first assumption and the structure of the iterates, we get:
\begin{align} \label{eq:markov}
P(W_{t+1} | W^t, Z^T, S) = P(W_{t+1} | W_t, Z_{t+1}).
\end{align}
The above setup was proposed by~\cite{genErrMISGLD}, who specifically studied the case $p=2$. 
Denoting by $W$ the final output of the algorithm (some function of $W^T$), they show that
\begin{theorem}[{\cite[Theorem 1]{genErrMISGLD}}] \label{thm:Pensia}
If the boundedness assumption holds for $p=2$ and 
$\xi_t \sim \mathcal{N}(0, \sigma^2_t I_d)$, then
\begin{align} \label{eq:thm-Pensia}
I(S;W) \leq \frac{d}{2} \sum_{t=1}^T \log \left(1 + \frac{\eta^2_t L^2}{d \sigma^2_t} \right).
\end{align}
\end{theorem}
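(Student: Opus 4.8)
The plan is to combine three ingredients: the data-processing inequality together with the chain rule for mutual information, the Markov structure \eqref{eq:markov} induced by Assumption~1, and the classical capacity bound for the $d$-dimensional Gaussian channel under an average-power constraint.

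First I would peel off the output map: since $W$ is a function of $W^T$, data processing gives $I(S;W)\le I(S;W^T)$, and the chain rule expands this as $I(S;W^T)=\sum_{t=1}^T I(S;W_t\mid W^{t-1})$. For each term I would introduce $Z_t$ as an auxiliary variable, $I(S;W_t\mid W^{t-1})\le I(S,Z_t;W_t\mid W^{t-1})=I(Z_t;W_t\mid W^{t-1})+I(S;W_t\mid W^{t-1},Z_t)$, and argue that the last summand vanishes. This is exactly where \eqref{eq:markov} enters: marginalizing \eqref{eq:markov} over $Z^T$ while holding $Z_t$ fixed shows $P(W_t\mid W^{t-1},Z_t,S)=P(W_t\mid W_{t-1},Z_t)$, so $S$ and $W_t$ are conditionally independent given $(W^{t-1},Z_t)$. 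Hence $I(S;W)\le\sum_{t=1}^T I(Z_t;W_t\mid W^{t-1})$.

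Next I would fix a realization $W^{t-1}=w^{t-1}$ and analyze $I(Z_t;W_t\mid W^{t-1}=w^{t-1})$. Conditioned on $w^{t-1}$, the update \eqref{eq:iteration1} reads $W_t=g(w_{t-1})-\eta_t F(w_{t-1},Z_t)+\xi_t$ with $\xi_t\sim\mathcal N(0,\sigma_t^2 I_d)$ independent of $Z_t$; writing $U\defeq-\eta_t F(w_{t-1},Z_t)$, which is a deterministic function of $Z_t$ with $\|U\|_2\le\eta_t L$ by Assumption~2 at $p=2$, this is the vector Gaussian channel $W_t-g(w_{t-1})=U+\xi_t$. Data processing then gives $I(Z_t;W_t\mid w^{t-1})=I(U;U+\xi_t)=h(U+\xi_t)-h(\xi_t)=h(U+\xi_t)-\tfrac d2\log(2\pi e\sigma_t^2)$. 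Bounding $h(U+\xi_t)$ by the Gaussian maximum-entropy inequality, then applying AM--GM to the eigenvalues of $\cov(U+\xi_t)=\cov(U)+\sigma_t^2 I_d$ and using $\mathrm{tr}\,\cov(U)\le\E[\|U\|_2^2]\le\eta_t^2 L^2$, yields $I(U;U+\xi_t)\le\tfrac d2\log\!\big(1+\tfrac{\eta_t^2 L^2}{d\sigma_t^2}\big)$. Since this bound is independent of $w^{t-1}$, averaging over $W^{t-1}$ and summing over $t=1,\dots,T$ gives the claimed inequality.

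I expect the only delicate point to be the conditional-independence bookkeeping in the second step: using Assumption~1 / \eqref{eq:markov} both to establish $I(S;W_t\mid W^{t-1},Z_t)=0$ and, after conditioning on $W^{t-1}$, to ensure that $(Z_t,\xi_t)$ behaves as an independent (channel input, noise) pair so that the per-step term genuinely reduces to a memoryless Gaussian channel with an $L_2$-constrained input. Everything downstream --- the maximum-entropy bound, AM--GM on the covariance spectrum, and the trace/second-moment estimate --- is routine. Equivalently, the per-step bound is just the statement that the capacity of the $d$-dimensional Gaussian channel with per-coordinate noise variance $\sigma_t^2$ and total input power budget $\eta_t^2 L^2$ equals $\tfrac d2\log(1+\eta_t^2 L^2/(d\sigma_t^2))$, attained by an isotropic Gaussian input.
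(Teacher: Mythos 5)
Your proposal is correct, but note that this statement is \cite[Theorem 1]{genErrMISGLD}, which the present paper only quotes for comparison and does not reprove; the argument you give is essentially the original proof from that reference. Your bookkeeping is sound: the decomposition $I(S;W)\leq\sum_t I(Z_t;W_t\mid W^{t-1})$ follows from data processing, the chain rule, and the conditional independence implied by equation~\eqref{eq:markov} (your step showing $I(S;W_t\mid W^{t-1},Z_t)=0$ by marginalizing over the remaining $Z$'s is exactly the right way to use Assumption~1, together with the implicit fact that $\xi_t$ is fresh noise independent of $(S,Z^t,W^{t-1})$), and the per-step term is then the power-constrained vector Gaussian channel, handled by maximum entropy, AM--GM on the spectrum of $\cov(U)+\sigma_t^2 I_d$, and $\mathrm{tr}\,\cov(U)\leq \eta_t^2L^2$. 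It is worth seeing how this relates to what the paper itself does for maximal leakage: Proposition~\ref{prop:general} uses the very same first stage (chain rule plus the Markov structure, reducing to a single-step additive-noise channel conditioned on $w_{t-1}$), but the second stage necessarily differs, since maximal leakage of the channel $U+\xi_t$ with $U\in\B_p(0,\eta_tL)$ is not a capacity but an integral of $\sup_{x}f_t(w-x)$, split over the ball and its complement; this replaces your maximum-entropy/AM--GM step, depends on the noise law only through that envelope integral, and is what later yields Theorems~\ref{thm:gaussian-ml}, \ref{thm:Linf}, and the noise-optimization result, at the price of losing the clean ``averaged power'' interpretation that makes the mutual-information bound scale as $d\log(1+\eta_t^2L^2/(d\sigma_t^2))$.
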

By virtue of inequality~\eqref{eq:gen-bound-MI}, this yields a bound on the expected generalization error.

In this work, we derive bounds on the maximal leakage between $\ml{S}{W}$ for iterative noisy algorithms, which leads to high-probability bounds on the generalization error (cf. equation~\eqref{eq:genErrProbBoundMaxLeak}). We consider different scenarios in which $F$ is bounded in $L_1$, $L_2$, or $L_\infty$ norm, and the added noise is Laplace, Gaussian, or Uniform. It is worth noting that the bounds we derive depend on $F$ only through the boundedness assumption (Assumption 2 above). Considering $F$ to be a gradient yields the most (practically) interesting scenario in which our results hold, as it represents a widely used family of learning algorithms. However, we do not leverage any structure that is particular to gradients (beyond the boundedness assumption).

\subsection{Notation} \label{sec:notation}
Given $d \in \mathbb{N}$, $w \in \mathbb{R}^d$, and $r >0$, let $\B_p^{d}(w,r) = \{ x \in \mathbb{R}^d: \| x - w \|_p \leq r \}$ denote the $L_p$-ball of radius $r$ and center $w$, and let $V_p (d,r)$ denote its corresponding volume. When the dimension $d$ is clear from the context, we may drop the superscript and write $\B_p(w,r)$. Given a set $S$, we denote its complement by $\overline{S}$. The $i$-th component of $w_t$ will be denoted by $w_{ti}$. 

We denote the pdf of the noise $\xi_t$ by $f_t: \R^d \rightarrow \R$. The following functional will be useful for our study: given $d \in \mathbb{N}$, $p > 0$, a pdf $f: \R^d \rightarrow \R$, and an $r \geq 0$, define
\begin{align}
    \label{eq:def-h}
    h(d,p,f,r) \defeq  \int \limits_{\overline{\B_p^d}(0, r)} \sup_{x \in \B_p^d(0,r)} f(w - x) \dd w. 
\end{align}
We denote the ``positive octant'' by $A_d$, i.e.,
\begin{align}
    \label{eq:def-positive-octant}
    A_d \defeq \{ w \in \R^d: w_i \geq 0, \text{ for all } i \in \{1,2,\ldots,d\} \}.
\end{align}
Since we will mainly consider pdfs that are symmetric (Gaussian, Laplace, uniform), the $h$ functional ``restricted'' to $A_d$ will be useful:
\begin{align}
    \label{eq:def-h-positive}
    h_+ (d,p,f,r) \defeq  \int \limits_{\overline{\B_p^d}(0, r) \cap A_d } \sup_{x \in \B_p^d(0,r)} f(w - x) \dd w. 
\end{align}

\subsection{General Bound}

\begin{proposition} \label{prop:general}
    Suppose $f_{t}: \R^d \rightarrow \R$ is maximized for $x = 0$. If Assumptions 1 and 2 hold for some $p > 0$, then
\begin{align}
    \label{eq:prop-general}
    \ml{S}{W} \leq \sum_{t=1}^T \log \left( f_{t} (0) V_p(d,\eta_t L) + h(d,p,f_t, \eta_t L) \right),
\end{align}
where $h$ is defined in equation~\eqref{eq:def-h}.
\end{proposition}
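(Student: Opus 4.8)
The plan is to apply the chain rule for maximal leakage (Lemma~\ref{lem:chain-rule-maxLeak}) to decompose $\ml{S}{W}$ across iterations, then bound each conditional term $\cml{S}{W_t}{W^{t-1}}$ using the Markov structure~\eqref{eq:markov} and the continuous-variable formula for maximal leakage. First I would note that $W$ is a function of $W^T$, so by the data-processing property of maximal leakage $\ml{S}{W} \leq \ml{S}{W^T} \leq \sum_{t=1}^T \cml{S}{W_t}{W^{t-1}}$, where the last step is~\eqref{eq:lem:chain-rule-general}. So it suffices to show, for each fixed conditioning value $w^{t-1}$, that the maximal leakage from $S$ to $W_t$ (under $P_{S W_t | W^{t-1} = w^{t-1}}$) is at most $\log\!\left( f_t(0) V_p(d,\eta_t L) + h(d,p,f_t,\eta_t L)\right)$, since taking $\esssup$ over $w^{t-1}$ preserves this uniform bound.

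Next I would fix $w^{t-1}$ and use the continuous formula $\cml{S}{W_t}{W^{t-1}=w^{t-1}} = \log \int \esssup_{s} f_{W_t | S, W^{t-1}}(w | s, w^{t-1})\, \dd w$. By Assumption~1 and the iterate structure~\eqref{eq:iteration1}, conditioned on $W_{t-1} = w_{t-1}$ (the relevant component of $w^{t-1}$) and on $S=s$ (which determines $Z_t$), we have $W_t = g(w_{t-1}) - \eta_t F(w_{t-1}, Z_t) + \xi_t$, so the conditional density of $W_t$ is $f_t\!\left(w - g(w_{t-1}) + \eta_t F(w_{t-1},z_t)\right)$, a shifted copy of the noise pdf. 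Writing $c = g(w_{t-1})$ and noting that $\|\eta_t F(w_{t-1}, z_t)\|_p \leq \eta_t L$ by Assumption~2, the essential supremum over $s$ of this density is at most $\sup_{\|x\|_p \leq \eta_t L} f_t(w - c - x)$. Therefore $\cml{S}{W_t}{W^{t-1}=w^{t-1}} \leq \log \int \sup_{x \in \B_p^d(0,\eta_t L)} f_t(w - c - x)\, \dd w$, and after the change of variables $w \mapsto w + c$ the constant $c$ disappears, giving $\log \int \sup_{x \in \B_p^d(0,\eta_t L)} f_t(w - x)\, \dd w$.

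Finally I would split the integral over $\R^d$ into the region $\B_p^d(0,\eta_t L)$ and its complement. On the complement, by definition this integral equals $h(d,p,f_t,\eta_t L)$. On the ball itself, since $f_t$ is maximized at $x=0$, we have $\sup_{x \in \B_p^d(0,\eta_t L)} f_t(w-x) \leq f_t(0)$ for every $w$, so that piece is at most $f_t(0) \cdot V_p(d,\eta_t L)$. Adding the two pieces yields the claimed bound on each conditional term, and summing over $t$ completes the proof. The main subtlety — rather than a deep obstacle — is justifying that the essential supremum over $s$ (equivalently over the realized value of $\eta_t F(w_{t-1},Z_t)$, which ranges over some subset of the $L_p$-ball of radius $\eta_t L$) is correctly upper bounded by the supremum over the entire ball; this is where Assumption~2 enters, and one should be slightly careful that $\esssup$ is with respect to $P_S$ while the pointwise $\sup$ over the ball is an over-approximation that holds regardless of which points in the ball actually have positive probability. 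A secondary point is checking measurability/finiteness of $h$ so that the displayed inequality is not vacuous, but this is handled by the standing assumption that $\xi_t$ has an absolutely continuous (hence integrable) density.
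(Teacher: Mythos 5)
Your proposal is correct and follows the same core argument as the paper: chain rule for maximal leakage across iterations, the continuous-alphabet formula for each conditional term, a change of variables to remove $g(w_{t-1})$, upper-bounding the essential supremum by the pointwise supremum over $\B_p(0,\eta_t L)$ via Assumption~2, and splitting the integral into the ball (where the maximizer at $0$ gives $f_t(0)V_p(d,\eta_t L)$) and its complement (which is $h$). The one place you deviate is the decomposition: the paper first passes to $\ml{Z^T}{W^T}$, applies the chain rule there, and uses the Markov relation~\eqref{eq:markov} to reduce each term to $\cml{Z_t}{W_t}{W_{t-1}}$, whereas you keep $S$ throughout and bound $\cml{S}{W_t}{W^{t-1}}$ directly. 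That variant works, but your justification contains a small imprecision: under Assumption~1 the sampling of $Z_t$ from $S$ may itself be random, so $S=s$ (together with $w^{t-1}$) does \emph{not} determine $Z_t$, and the conditional density of $W_t$ given $S=s$, $W^{t-1}=w^{t-1}$ is a mixture $\int f_t\left(w-g(w_{t-1})+\eta_t F(w_{t-1},z_t)\right)\dd P(z_t\mid s,w^{t-1})$ rather than a single shifted copy of $f_t$. The bound is unaffected, since every component of the mixture (hence the mixture itself) is dominated by $\sup_{x\in\B_p(0,\eta_t L)} f_t(w-g(w_{t-1})-x)$, but you should state it that way; the paper's reduction to conditioning on $Z_t$ itself is precisely what makes the conditional law a single shifted noise density and avoids this point.
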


The above bound is appealing as it implicitly poses an optimization problem: given a constraint on the noise pdf $f_t$ (say, a bounded variance), one may choose $f_t$ as to minimize the upper bound in equation~\eqref{eq:prop-general}. Moreover, despite its generality, we show that it is tight in several interesting cases, including when $p=2$ and $f_t$ is the Gaussian pdf.

In the next section, we consider several scenarios for different values of $p$ and different noise distributions. As a testament to the tractability of maximal leakage, we derive exact semi-closed form expressions for the bound of Proposition~\ref{prop:general}. Finally, it is worth noting that the form of the bound allows us to choose different noise distributions at different time steps, but these examples are outside the scope of this paper. \\

\begin{proof}
We proceed as in the work of \cite{genErrMISGLD}: 
\begin{align}
\ml{S}{W} \leq \ml{Z^T}{W^T} \leq \sum_{t=1}^T \cml{Z^T}{W_t}{W^{t-1}} = \sum_{t=1}^T \cml{Z_t}{W_t}{W_{t-1}},
\end{align}
where the first inequality follows from Lemma 2 of~\cite{genErrMISGLD} and the data processing inequality for maximal leakage~\cite[Lemma 1]{leakageLong}, the second inequality follows Lemma~\ref{lem:chain-rule-maxLeak}, and the equality follows from~\eqref{eq:markov}. Now,
\begin{align}
\exp\left\lbrace \cml{Z_t}{W_t}{W_{t-1}=w_{t-1}} \right\rbrace
    & =     \int_{\R^d} \esssup_{P_{z_t}} p(w_t | Z_t) \dd w_t \\
    & = \int_{\R^d} \esssup_{P_{z_t}} f_t \left( w_t - g(w_{t-1}) + \eta_t F(w_{t-1},Z_t) \right) \dd w_t, \\
    & = \int_{\R^d} \esssup_{P_{z_t}} f_t \left( w_t + \eta_t F(w_{t-1},Z_t) \right) \dd w_t, 
\end{align}
where the last equality follows from a change of a variable ${w_t} \gets w_t - g(w_{t-1})$. Finally, since $\eta_t F(w_{t-1},z_t) \in \B_p(0, \eta_t L)$ by assumption, we can further upper-bound the above by:
\begin{align}
     & \exp\left\lbrace  \cml{Z_t}{W_t}{W_{t-1}=w_{t-1}} \right\rbrace \\
     & \leq \int_{\R^d} \sup\limits_{x_t \in \B_p(0,\eta_t L)} f_t \left( w_t + x_t \right) \dd w_t  \label{eq:prop-ball-Lp} \\
     & = \int\limits_{\B_p(0,\eta_t L) } \sup\limits_{x_t \in \B_p(0,\eta_t L)}  f_{t} \left( w_{t} + x_{t} \right) \dd w_t + \int\limits_{\overline{\B_p}(0,\eta_t L)} \sup\limits_{ x_t \in \B_p(0,\eta_t L)} f_t \left( w_t + x_t \right) \dd w_t \\
     &  = f_{t} (0) V_p(d, \eta_t L) + \int\limits_{ \overline{\B_p} (0,\eta_t L)} \sup\limits_{ x_t \in \B_p(0,\eta_t L)} f_t \left( w_t - x_t \right) \dd w_t,
\end{align}
where the last equality follows from the assumptions on $f_{t}$.
\end{proof}

\section{Boundedness in $L_2$-Norm}\label{sec:l2Norm}

Considering the case where $F$ computes a gradient, then boundedness in $L_2$-norm is a common assumption. It is commonly enforced, for instance, using gradient clipping~\citep{abadi2016tensorflow,abadi2016deep,chen2020understanding}.


\begin{theorem} \label{thm:gaussian-ml}
If the boundedness assumption holds for $p \leq 2$ and 
$\xi_t \sim \mathcal{N}(0, \sigma^2_t I_d)$, then
\begin{align} \label{eq:thm-gaussian-ml}
\ml{S}{W} \leq  \sum_{t=1}^T \log \left( \frac{V_2 (d,\eta_t L)}{(2 \pi \sigma_t^2)^{d/2}} +  \frac{1}{\Gamma \left( \frac{d}{2} \right)  } \sum_{i=0}^{d-1} {d-1\choose{i}} \Gamma \left( \frac{i+1}{2} \right)  
\left(  \frac{\eta_t L}{ \sigma_t \sqrt{2}}  \right)^{d-1-i} \right), 
\end{align} 
where $\ds V_2(d,r) = \frac{\pi^{d/2}}{\Gamma \left(\frac{d}{2}+1 \right)} r^d$. 
\end{theorem}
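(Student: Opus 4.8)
The plan is to instantiate Proposition~\ref{prop:general} with $f_t$ the isotropic Gaussian pdf and then evaluate the functional $h(d,p,f_t,\eta_t L)$ in closed form. The first step is to verify the hypothesis of the proposition: the isotropic Gaussian density $f_t(x) = (2\pi\sigma_t^2)^{-d/2}\exp(-\|x\|_2^2/(2\sigma_t^2))$ is indeed maximized at $x=0$, so the proposition applies and gives
\begin{align}
\ml{S}{W} \leq \sum_{t=1}^T \log\left( \frac{V_p(d,\eta_t L)}{(2\pi\sigma_t^2)^{d/2}} + h(d,p,f_t,\eta_t L)\right).
\end{align}
Note that since $p\le 2$ implies $\B_p^d(0,r)\subseteq \B_2^d(0,r)$, the volume term and the supremum in $h$ are both over a set contained in the $L_2$-ball, so it suffices to upper-bound everything by the $p=2$ quantity; in fact I would first reduce to $p=2$ by the inclusion $\B_p(0,\eta_t L)\subseteq\B_2(0,\eta_t L)$, keeping only $V_2(d,\eta_t L)$ in the volume term (as in the statement) and enlarging the sup-domain in $h$ to the $L_2$-ball.

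The core computation is therefore $h(d,2,f_t,r)$ with $r=\eta_t L$. Because the Gaussian is radially decreasing, for a point $w$ with $\|w\|_2 > r$ the best shift $x\in\B_2(0,r)$ is the one pushing $w-x$ as close to the origin as possible, i.e.\ $x = r\,w/\|w\|_2$, giving $\sup_{x\in\B_2(0,r)} f_t(w-x) = f_t\big((\|w\|_2 - r)\,e\big)$ for a unit vector $e$, which depends on $w$ only through $\rho \defeq \|w\|_2$. Converting to polar coordinates, $\int_{\overline{\B_2}(0,r)}\sup_x f_t(w-x)\,\dd w = \frac{d\,\pi^{d/2}}{\Gamma(d/2+1)}\int_r^\infty \rho^{d-1} (2\pi\sigma_t^2)^{-d/2}\exp\!\big(-(\rho-r)^2/(2\sigma_t^2)\big)\,\dd\rho$, using that the surface area of the radius-$\rho$ sphere is $d V_2(d,1)\rho^{d-1}$. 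Substituting $u = (\rho - r)/(\sigma_t\sqrt2)$ turns this into $\frac{c_d}{\sqrt\pi}\int_0^\infty (u + r/(\sigma_t\sqrt2))^{d-1} e^{-u^2}\,\dd u$ for an appropriate constant; expanding $(u + r/(\sigma_t\sqrt2))^{d-1}$ by the binomial theorem and using the Gamma-function identity $\int_0^\infty u^{k} e^{-u^2}\,\dd u = \tfrac12\Gamma\!\big(\tfrac{k+1}{2}\big)$ produces exactly the sum $\frac{1}{\Gamma(d/2)}\sum_{i=0}^{d-1}\binom{d-1}{i}\Gamma\!\big(\tfrac{i+1}{2}\big)\big(\tfrac{\eta_t L}{\sigma_t\sqrt2}\big)^{d-1-i}$ after the $\pi^{d/2}$, $2^{d/2}$, and $\Gamma(d/2+1) = \tfrac d2\Gamma(d/2)$ factors cancel appropriately. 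Summing the per-step $\log$ bounds over $t=1,\dots,T$ then yields~\eqref{eq:thm-gaussian-ml}.

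The main obstacle is bookkeeping the constants through the polar-coordinate change of variables and the binomial expansion so that the surface-area prefactor $d V_2(d,1) = d\pi^{d/2}/\Gamma(d/2+1)$, the Gaussian normalization $(2\pi\sigma_t^2)^{-d/2}$, and the substitution Jacobian $\sigma_t\sqrt2$ combine to exactly $1/\Gamma(d/2)$ times the binomial sum with the stated powers of $\eta_t L/(\sigma_t\sqrt2)$; in particular one must check that the $\rho^{d-1}$ weight is handled by expanding around $\rho = r$ rather than $\rho = 0$ (so the binomial exponent is $d-1$, matching the sum index range $0$ to $d-1$). A secondary point requiring care is justifying that the $\sup$ over $x\in\B_2(0,r)$ for $\|w\|_2 > r$ is attained at the claimed radial point; this follows from the triangle inequality $\|w - x\|_2 \ge \|w\|_2 - \|x\|_2 \ge \|w\|_2 - r$ together with monotonicity of $t\mapsto \exp(-t^2/(2\sigma_t^2))$ on $[0,\infty)$, and equality is achieved by $x = r w/\|w\|_2$.
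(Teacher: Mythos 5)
Your proposal is correct and follows essentially the same route as the paper's proof: reduce $p\le 2$ to $p=2$ via the ball inclusion in Proposition~\ref{prop:general}, identify the maximizing shift $x=\eta_t L\, w/\|w\|_2$ by the triangle inequality and radial monotonicity, and evaluate $h(d,2,f_t,\eta_t L)$ radially with a binomial expansion and $\int_0^\infty u^k e^{-u^2}\,\dd u=\tfrac12\Gamma(\tfrac{k+1}{2})$. The only cosmetic difference is that you invoke the sphere surface-area factor $dV_2(d,1)\rho^{d-1}=\tfrac{2\pi^{d/2}}{\Gamma(d/2)}\rho^{d-1}$ directly, whereas the paper derives the same factor by integrating the angular terms in spherical coordinates.
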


Note that even if the parameter $L$ is large (e.g., Lipschitz constant of a neural network~\citep{genErrSGDLDataDependent}), it appears in~\eqref{eq:thm-gaussian-ml} normalized by $\Gamma(d/2)$ so its effect is significantly dampened (as $d$ is also typically very large). 

Finally, note that the bound in Proposition~\ref{prop:general} is increasing in $p$: this can be seen from line~\eqref{eq:prop-ball-Lp}, where the supremum over $\B_p$ can be further upper-bounded by a supremum over $\B_{p'}$ for $p' > p$. Therefore for $q \leq p$, the bound induced by Proposition~\ref{prop:general} is smaller. The bound in~\Cref{thm:gaussian-ml} corresponds to $p=2$ and goes to 0 (as $d$ grows), hence the bound induced by Proposition~\ref{prop:general} goes to 0 for all $q \leq p=2$. \\

\begin{proof}
The conditions of Proposition~\ref{prop:general} are satisfied, thus it is sufficient to prove the bound for $p=2$ (cf. discussion above):
\begin{align}
    \ml{S}{W} & \leq \sum_{t=1}^T \log \left( f_{t} (0) V_2(d,\eta_t L) + \int \limits_{\overline{\B_2}(0, \eta_t L)} \sup_{x_t \in \B_2(0,\eta_{t} L)} f_t(w_t - x_t) \dd w_t \right) \\
    & = \sum_{t=1}^T \log \!\! \left( \!\! \frac{V_2(d,\eta_t L)}{(2 \pi \sigma_t^2)^{\frac{d}{2}}} + \!\!\!\! \int  \limits_{\overline{\B_2} (0, \eta_t L)} \!\! \sup_{x_t \in \B_2(0,\eta_{t} L)} \!\! \frac{1}{(2 \pi \sigma_t^2)^{\frac{d}{2}}} \exp \left\lbrace -\frac{\|w_t-x_t\|^2_2}{ 2 \sigma_t^2} \right\rbrace \dd w_t \!\! \right) \! . \label{eq:prop-for-gaussian}
\end{align}
Hence, it remains to show that the second term inside the $\log$ matches that of equation~\eqref{eq:thm-gaussian-ml}. To that end, note that the point in $\mathcal{B}_2(0,\eta_t L)$ that minimizes the distance to $w_t$ is given $\frac{\eta_t L}{\| w_t \|} w_t$. So we get
\begin{align}
\| w_t - x_t \| \geq \| w_t - \frac{\eta_t L}{\| w_t \|} w_t \| = \| w_t\| - \eta_t L.
\end{align}
 Then, 
\begin{align} 
h(d,2,f_t,\eta_t L) & =     \int \limits_{\overline{\B_2} (0, \eta_t L)} \sup_{x_t \in \B_2(0,\eta_{t} L)} \frac{1}{(2 \pi \sigma_t^2)^{\frac{d}{2}}} \exp \left\lbrace -\frac{\|w_t-x_t\|^2_2}{ 2 \sigma_t^2} \right\rbrace \dd w_t \\
&  = \int\limits_{\overline{\B_2} (0, \eta_t L)}  \frac{1}{(2 \pi \sigma_t^2)^{\frac{d}{2}}} \exp \left\lbrace - \frac{(\|w_t\|_2 - \eta_t L )^2}{2 \sigma_t^2} \right\rbrace \dd w_t.
\label{eq:almost-there}
\end{align}
To evaluate this integral, we use spherical coordinates (details in Appendix~\ref{app:spherical}). Then,
\begin{align} \label{eq:MlGaussInt}
h(d,2,f_t,\eta_t L) = \left ( \frac{\eta_t L}{ \sigma_t \sqrt{2}} \right)^{d-1} \frac{1}{\Gamma \left( \frac{d}{2} \right)  } \sum_{i=0}^{d-1} {d-1 \choose i} \left( \frac{ \sigma_t \sqrt{2}}{\eta_t L}  \right)^i \Gamma \left( \frac{i+1}{2} \right).
\end{align}
Combining equations~\eqref{eq:prop-for-gaussian} and~\eqref{eq:MlGaussInt} yields~\eqref{eq:thm-gaussian-ml}.
\end{proof}

\begin{remark}
One could also derive a semi-closed form bound for the case in which the added noise is uniform. 
\end{remark}

\section{Boundedness in $L_\infty$-Norm}

The bound in Proposition~\ref{prop:general} makes minimal assumptions about the pdf $f_t$. In many practical scenarios we have more structure we could leverage. In particular, we make the following standard assumptions in this section: 
\begin{itemize}
    \item $\xi_t$ is composed of i.i.d components. Let $f_{t0}$ be the pdf of a component, then $\ds f_t(x_t) \ =  \prod_{i=1}^d f_{t0}(x_{ti})$.
    \item $f_{t0}$ is symmetric around 0 and non-increasing over $[0,\infty).$ 
\end{itemize}
In this setting, Proposition~\ref{prop:general} reduces to a very simple form for $p = \infty$:

\begin{theorem} \label{thm:Linf}
Suppose $f_{t}$ satisfies the above assumptions. If Assumptions 1 and 2 hold for $p = \infty$, then
\begin{align} \label{eq:thm-Linf}
\ml{S}{W} \leq \sum_{t=1}^T d \log \left( 1 + 2 \eta_t L f_{t0}(0) \right).
\end{align}
\end{theorem}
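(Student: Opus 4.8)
The plan is to invoke Proposition~\ref{prop:general} with $p = \infty$ and then evaluate the resulting integral by exploiting the product structure of $f_t$. Since $f_t$ is a product of the symmetric, non-increasing components $f_{t0}$, it is maximized at $x = 0$, so the hypotheses of Proposition~\ref{prop:general} are met, and we obtain
\begin{align}
\ml{S}{W} \leq \sum_{t=1}^T \log \left( f_t(0) V_\infty(d, \eta_t L) + h(d, \infty, f_t, \eta_t L) \right).
\end{align}
The first step is to record that for the $L_\infty$-ball, $V_\infty(d, r) = (2r)^d$ and $f_t(0) = f_{t0}(0)^d$, so the first term is $\left( 2 \eta_t L\, f_{t0}(0) \right)^d$. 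The heart of the argument is the observation that for $p = \infty$ everything factors coordinate-wise: the $L_\infty$-ball $\B_\infty^d(0, r)$ is the product of intervals $[-r, r]$, so $\sup_{x \in \B_\infty^d(0,r)} f_t(w - x) = \prod_{i=1}^d \sup_{|x_i| \leq r} f_{t0}(w_i - x_i)$, and moreover (using that $f_{t0}$ is symmetric and non-increasing on $[0,\infty)$) each one-dimensional supremum equals $f_{t0}(0)$ if $|w_i| \leq r$ and $f_{t0}(|w_i| - r)$ if $|w_i| > r$. Define the one-dimensional quantity $\phi(r) \defeq \int_{\R} \sup_{|x| \leq r} f_{t0}(w - x)\, \dd w = 2r f_{t0}(0) + 2\int_r^\infty f_{t0}(w - r)\, \dd w = 2r f_{t0}(0) + 2\int_0^\infty f_{t0}(u)\, \dd u = 2r f_{t0}(0) + 1$, where the last step uses that $f_{t0}$ integrates to $1$ and is symmetric.

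With this in hand, the second step is to combine the two terms. Since the integrand in $\int_{\R^d} \sup_{x \in \B_\infty(0,r)} f_t(w - x)\, \dd w$ factors as a product over coordinates and the domain is all of $\R^d$, Fubini gives
\begin{align}
\int_{\R^d} \sup_{x \in \B_\infty^d(0,r)} f_t(w - x)\, \dd w = \prod_{i=1}^d \phi(r) = \left( 1 + 2r f_{t0}(0) \right)^d,
\end{align}
with $r = \eta_t L$. Note this single integral already equals $f_t(0) V_\infty(d, \eta_t L) + h(d, \infty, f_t, \eta_t L)$ — the splitting into ``inside the ball'' and ``outside the ball'' in Proposition~\ref{prop:general} is just a regrouping, so there is no need to compute $h$ separately; one can read the bound directly off line~\eqref{eq:prop-ball-Lp} of the proof of Proposition~\ref{prop:general} before it is split. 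Substituting, $\ml{S}{W} \leq \sum_{t=1}^T \log\left( (1 + 2\eta_t L f_{t0}(0))^d \right) = \sum_{t=1}^T d \log(1 + 2 \eta_t L f_{t0}(0))$, which is exactly~\eqref{eq:thm-Linf}.

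The only genuinely delicate point — and the step I would be most careful about — is justifying the interchange of the supremum with the product, i.e. that $\sup_{x \in \B_\infty^d(0,r)} \prod_i f_{t0}(w_i - x_i) = \prod_i \sup_{|x_i| \leq r} f_{t0}(w_i - x_i)$. This holds precisely because the constraint set $\B_\infty^d(0,r)$ is a Cartesian product of the per-coordinate constraints $\{|x_i| \leq r\}$ (this is special to $p = \infty$ and is why the other norms do not collapse so cleanly) and because each factor $f_{t0}(w_i - x_i) \geq 0$ is nonnegative, so the product is maximized by maximizing each factor independently; the symmetry and monotonicity of $f_{t0}$ then identify each per-coordinate maximizer as $x_i = w_i$ clipped to $[-r, r]$, giving value $f_{t0}\big((|w_i| - r)_+\big)$. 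Everything else is the elementary one-dimensional integral computation for $\phi(r)$ above, together with Fubini, which is unproblematic since the integrand is nonnegative and measurable.
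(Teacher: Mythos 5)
Your proof is correct, and it takes a genuinely different route from the paper's. The paper keeps the decomposition of Proposition~\ref{prop:general} and evaluates the outside-the-ball term $h(d,\infty,f_t,\eta_t L)$ separately, by splitting $\overline{\B_\infty^d}(0,\eta_t L)$ according to whether $w^{d-1}$ lies in $\B_\infty^{d-1}(0,\eta_t L)$, deriving the recurrence $h(d)=\left(1+2\eta_t L f_{t0}(0)\right)h(d-1)+(2\eta_t L f_{t0}(0))^{d-1}$, and solving it (Appendix~\ref{app:recurrence-Linf}); adding back $f_t(0)V_\infty(d,\eta_t L)=(2\eta_t L f_{t0}(0))^d$ then gives $(1+2\eta_t L f_{t0}(0))^d$. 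You instead go back to line~\eqref{eq:prop-ball-Lp} before the inside/outside split and observe that for $p=\infty$ the whole integral tensorizes: the constraint set is a Cartesian product of intervals, the integrand is a product of nonnegative factors, so the supremum and the integral both factor coordinate-wise (your justification of $\sup\prod=\prod\sup$ via nonnegativity and the product structure is exactly the right point to flag), and Tonelli reduces everything to the one-dimensional quantity $\phi(\eta_t L)=1+2\eta_t L f_{t0}(0)$. Your remark that the split in Proposition~\ref{prop:general} is an exact regrouping (the integral over the ball equals $f_t(0)V_\infty(d,\eta_t L)$ because $f_t$ is maximized at $0$) is also correct, so the two computations yield the identical value, not merely the same upper bound. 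What each approach buys: yours is shorter and makes transparent why the bound has the clean form $d\log(1+2\eta_t L f_{t0}(0))$, but it is special to $p=\infty$ with i.i.d.\ components; the paper's recurrence technique is heavier here but serves as a template that is reused where tensorization fails, e.g.\ in the $L_1$/Laplace case of Theorem~\ref{thm:l1-laplace}.
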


Note that the bounded-$L_\infty$ assumption is \textit{weaker} than the bounded $L_2$-norm assumption. Moreover, the assumption of having a bounded $L_\infty$-norm is satisfied in~\cite{pichapati2019adaclip} where the authors clipped the gradient in terms of the $L_\infty$-norm, thus ``enforcing'' the assumption. On the other hand, the theorem has an intriguing form as, under standard assumptions, the bound depends on $f_{t0}$ only through $f_{t0}(0)$. This naturally leads to an optimization problem: given a certain constraint on the noise, which distribution $f^\star$ minimizes $f(0)$? The following theorem shows that, if the noise is required to have a bounded variance, then $f^\star$ corresponds to the uniform distribution:

\begin{theorem} \label{thm:opt-noise}
 Let $\mathcal{F}$ be the family of probability densities (over $\mathbb{R}$) satisfying for each $f \in \mathcal{F}$: 
 \begin{enumerate}
 \item $f$ is symmetric around 0.
 \item $f$ is non-increasing over $[0, \infty)$.
 \item $\E_f [X^2] \leq \sigma^2$.
 \end{enumerate}
Then, the distribution minimizing $f(0)$ over $\mathcal{F}$ is the uniform distribution $\mathcal{U}( -\sigma \sqrt{3}, \sigma \sqrt{3})$.  
 \end{theorem}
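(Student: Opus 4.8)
The plan is to use a rearrangement (``bathtub-principle'') argument: among all symmetric densities that are non-increasing on $[0,\infty)$ with a \emph{prescribed} value at $0$, the one with the smallest second moment is the \emph{uniform} density of that same height; matching this extremal object against the variance constraint $\E_f[X^2]\le\sigma^2$ then pins down the optimal height.

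Concretely, I would first observe that since $f$ is symmetric around $0$ and non-increasing on $[0,\infty)$, one has $f(0)=\esssup_x f(x)=:M$; if $M=\infty$ there is nothing to prove, so assume $0<M<\infty$ (positivity holds because $\int f = 1$). Set $a\defeq \frac{1}{2M}$ and let $g$ be the density equal to $M$ on $[-a,a]$ and to $0$ outside; then $\int g = 2Ma = 1$, so $g$ is a bona fide symmetric density, non-increasing on $[0,\infty)$, with $g(0)=M$. The key step is to show $\E_f[X^2]\ge \E_g[X^2]$. Writing $h\defeq f-g$, we have $\int h = 0$, and the pointwise bound $f\le M$ (from monotonicity and symmetry) gives $h\le 0$ on $[-a,a]$ while $h=f\ge 0$ on its complement. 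Subtracting $a^2\int h = 0$,
\begin{align}
\E_f[X^2]-\E_g[X^2] \ = \ \int x^2 h(x)\,\dd x \ = \ \int (x^2-a^2)\,h(x)\,\dd x \ \ge \ 0,
\end{align}
because on $[-a,a]$ both $x^2-a^2\le 0$ and $h\le 0$, whereas on the complement both factors are $\ge 0$; in either region the integrand is pointwise nonnegative.

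It then remains to combine this with the direct computation $\E_g[X^2]=a^2/3=\frac{1}{12M^2}$: the constraint $\E_f[X^2]\le\sigma^2$ forces $\frac{1}{12M^2}\le\sigma^2$, i.e.\ $f(0)=M\ge \frac{1}{2\sqrt 3\,\sigma}$. This lower bound is attained, since $\mathcal{U}(-\sigma\sqrt3,\sigma\sqrt3)$ is symmetric, non-increasing on $[0,\infty)$, has variance $(\sigma\sqrt3)^2/3=\sigma^2$ (hence lies in $\mathcal{F}$), and takes the value $\frac{1}{2\sqrt3\,\sigma}$ at $0$. For the ``the'' in the statement (uniqueness), I would trace the equality cases: $f(0)=\frac{1}{2\sqrt3\,\sigma}$ forces $a=\sigma\sqrt3$ and, chaining the inequalities, $\E_f[X^2]=\E_g[X^2]$, so the displayed inequality is an equality, forcing $(x^2-a^2)h(x)=0$ a.e.; since $\{x^2=a^2\}$ is null, $h=0$ a.e., i.e.\ $f$ agrees a.e.\ with the uniform density on $(-\sigma\sqrt3,\sigma\sqrt3)$.

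The crux — and the only genuinely non-routine step — is the comparison $\E_f[X^2]\ge\E_g[X^2]$: recognizing that ``filling the density up to height $M$ near the origin'' minimizes variance at fixed height, and exploiting $\int h=0$ to replace $x^2$ by $x^2-a^2$ so that the single sign change of $h$ at $\pm a$ makes the integrand pointwise nonnegative. The remaining points — identifying $f(0)$ with $\esssup f$, discarding the case $M=\infty$, and the a.e.\ uniqueness — are routine but should be stated with a little care, since densities are only defined up to null sets.
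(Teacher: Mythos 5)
Your proof is correct, and it reaches the paper's key intermediate inequality $\E_f[X^2]\ge \frac{1}{12 f(0)^2}$ by a somewhat different mechanism. The paper folds the entire argument onto the half-line: it sets $X_+\sim 2f_+$ (the density of $|X|$), writes the second moment via the tail formula $\E[X_+^2]=\int_0^\infty 2x\,\Pr(X_+>x)\,\dd x$, bounds $\Pr(X_+\le x)\le 2xf(0)$ using $f\le f(0)$, and truncates the integral to $[0,1/(2f(0))]$, which gives $\frac{1}{12f(0)^2}$ in a two-line computation. Your route makes the implicit extremal object explicit: you compare $f$ directly with the uniform density $g$ of the same height $M=f(0)$ via the zero-mean perturbation $h=f-g$ and the weight $x^2-a^2$, whose signs align with those of $h$. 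These are two faces of the same comparison (the paper's CDF bound is exactly domination by the CDF of your $g$), but each has its own payoff: the paper's version is shorter and never introduces $g$; yours isolates the equality case cleanly, and your a.e.-uniqueness argument actually proves the ``the'' in the theorem statement, which the paper's proof does not address (it only exhibits attainment by $\mathcal{U}(-\sigma\sqrt{3},\sigma\sqrt{3})$). Two minor remarks: the identification $f(0)=\esssup_x f(x)$ is not needed (and for a pathological representative one only has $f(0)\ge\esssup f$); all your argument uses is the pointwise bound $f\le f(0)$, which follows from symmetry and monotonicity. Also, since a density is only specified a.e., uniqueness can only ever be ``up to a null set,'' exactly as you state it.
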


That is, uniform noise is optimal in the sense that it minimizes the upper bound in Theorem~\ref{thm:Linf} under bounded variance constraints. The proof of Theorem~\ref{thm:opt-noise} is deferred to Appendix~\ref{app:opt-noise}.

\subsection{Proof of Theorem~\ref{thm:Linf}}
    Since the assumptions of Proposition~\ref{prop:general} hold, then
    \begin{align}
        \ml{S}{W} & \leq \sum_{t=1}^T \log \left( f_t(0) V_\infty(d,\eta_t L) + \int \limits_{\overline{\B_\infty}(0, \eta_t L)} \sup_{x_t \in \B_\infty(0,\eta_{t} L)} f_t(w_t - x_t) \dd w_t \right) \\
        & = \sum_{t=1}^T \log \left( (2\eta_t L f_{t0}(0))^d  +  
        \int \limits_{\overline{\B_\infty}(0, \eta_t L)} \prod_{i=1}^d \sup_{ x_{ti}: |x_{ti}| \leq \eta_t L} f_{t0} (w_{ti}-x_{ti}) \dd w_t
        \right).
    \end{align}
It remains to show that $h(d,\infty,f_t,\eta_t L)$ (i.e., the second term inside the $\log$ in~\Cref{eq:def-h}) is equal to $(1+2\eta_t L f_{t0}(0))^d -(2\eta_t L f_{t0}(0))^d $. We will derive a recurrence relation for $h$ in terms of $d$. To simplify the notation, we drop the subscript $t$ and ignore the dependence of $h$ on $p=\infty$, $f_t$, and $\eta_t L$, so that we simply write $h(d)$ (and correspondingly, $h_+(d)$, cf.~\Cref{eq:def-h-positive}).

By symmetry, $h(d) = 2^d h_+(d)$. Letting $w^{d-1}:=(w_1,\ldots,w_{d-1})$, we will decompose the integral over $\overline{\B_\infty^d}(0,\eta_t L)$ into two disjoint subsets: 1) $ w^{d-1} \notin \B_\infty^{d-1}(0,\eta_t L)$, in which case $w_d$ can take any value in $\R$, and 2) $w^{d-1} \in \B_\infty^{d-1}(0,\eta_t L)$, in which case $w_d$ must satisfy $|w_d| > \eta_t L$.
\begin{align}
    h_+(d) &  = 
    \int \limits_{\overline{\B^{d-1}_\infty}(0, \eta_t L) \cap A_{d-1} } \prod_{i=1}^{d-1} \sup_{ x_{i}: |x_{i}| \leq \eta_t L} f (w_{i}-x_{i})  \int_{0}^\infty \sup_{x_d: |x_d| \leq \eta_t L} f(w_d - x_d) \dd w_d \dd w^{d-1}  \label{eq:Linf-messy-1}\\
    & ~ + \int \limits_{{\B^{d-1}_\infty}(0, \eta_t L) \cap A_{d-1} } \prod_{i=1}^{d-1} \sup_{ x_{i}: |x_{i}| \leq \eta_t L} f (w_{i}-x_{i})  \int_{\eta_t L}^\infty \sup_{x_d: |x_d| \leq \eta_t L} f(w_d - x_d) \dd w_d \dd w^{d-1} \label{eq:Linf-messy-2}
\end{align}
The innermost integral of line~\eqref{eq:Linf-messy-2} is independent of $w^{d-1}$ so that the outer integral is equal to $h_{+}(d-1)$. Similarly, the innermost integral of line~\eqref{eq:Linf-messy-1} is independent of $w^{d-1}$, and the supremum in the outer integral yields $f(0)$ for every $i$. Hence, we get
\begin{align}
    h(d) = \left(1+2\eta_t L f(0) \right) h (d-1)+ (2 \eta_t L f(0))^{d-1}, \label{eq:recurrence-Linf}
\end{align}
the detailed proof of which is deferred to Appendix~\ref{app:recurrence-Linf}. 
Finally, it is straightforward to check that $h(1) =1$, hence $h(d) = (1+2\eta_t L f(0))^d-(2\eta_t L f(0))^d$.

\section{Boundedness in $L_1$-Norm}

In this section, we consider the setting where Assumption 2 holds for $p=1$. By Proposition~\ref{prop:general}, any bound derived for $p=2$ holds for $p=1$ as well; in particular, Theorem~\ref{thm:gaussian-ml} applies. 
Nevertheless, it is possible to compute a semi-closed form directly for $p=1$ (cf. Theorem~\ref{thm:l1-gaussian} below).

We also consider the case in which the additive noise is Laplace, i.e., ``matching'' the $L_1$ constraint on the update function. Interestingly, we show that in this case the limit of maximal leakage, as $d$ goes to infinity, is finite.

\subsection{Bound for Laplace noise}
We say $X$ has a Laplace distribution, denoted by $X \sim \mathrm{Lap}(\mu,1/\lambda)$, if its pdf is given by $f(x)= \frac{\lambda}{2}e^{-\lambda |x-\mu|}$ for $x \in \R$, for some $\mu \in \R$ and $\lambda >0$. The corresponding variance is given by $2/\lambda^2$.
\begin{theorem} \label{thm:l1-laplace}
If the boundedness assumptions holds for $p=1$ and 
$\xi_t$ is composed of i.i.d components, each of which is $\sim \mathrm{Lap}(0, \frac{\sqrt{2}}{\sigma_t})$, then
\begin{align} \label{eq:thm-l1-laplace}
\ml{S}{W} \leq  \sum_{t=1}^T \log \left( \frac{V_1 (d,\eta_t L)}{( \sigma_t \sqrt{2})^{d}} +  \sum_{i=0}^{d-1} \frac{(\sqrt{2} \eta_t L/ \sigma_t)^i}{ i ! }  \right), 
\end{align} 
where $\ds V_1(d,r)  = \frac{(2r)^d}{ d!}$. Consequently, for fixed $T$,
\begin{align} \label{eq:thm-l1-laplace-limit}
\lim_{d \rightarrow \infty} \ml{S}{W} \leq \sum_{t=1}^T \frac{\sqrt{2} \eta_t L}{ \sigma_t}.
\end{align}
\end{theorem}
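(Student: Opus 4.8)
The plan is to apply Proposition~\ref{prop:general} with $p=1$ and $f_t$ the product of $d$ i.i.d.\ $\mathrm{Lap}(0,\sqrt{2}/\sigma_t)$ densities, so that everything reduces to evaluating the functional $h(d,1,f_t,\eta_t L)$ in semi-closed form. Write $\lambda_t = \sqrt{2}/\sigma_t$ and $r = \eta_t L$. First I would note that $f_t(0) = (\lambda_t/2)^d = (\sigma_t\sqrt2)^{-d}$ and $V_1(d,r) = (2r)^d/d!$, which gives the first term inside the logarithm; so the work is in showing
\begin{align}
h(d,1,f_t,r) = \sum_{i=0}^{d-1} \frac{(\lambda_t r)^i}{i!}.
\end{align}

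For the $L_1$ ball, the key geometric observation is that the point of $\B_1^d(0,r)$ closest (in any sense that matters here, since the Laplace product density is a decreasing function of $\|\cdot\|_1$) to a given $w$ outside the ball is obtained by moving $w$ toward the origin by $L_1$-distance exactly $r$ without changing signs or zero pattern, so that $\sup_{x\in\B_1^d(0,r)} f_t(w-x) = (\lambda_t/2)^d e^{-\lambda_t(\|w\|_1 - r)}$ for $\|w\|_1 > r$. Hence
\begin{align}
h(d,1,f_t,r) = \left(\frac{\lambda_t}{2}\right)^d e^{\lambda_t r} \int\limits_{\{\|w\|_1 > r\}} e^{-\lambda_t \|w\|_1}\, \dd w.
\end{align}
Then I would compute this integral by conditioning on the $L_1$-radius: the "surface area" of $\{\|w\|_1 = \rho\}$ scales as $\rho^{d-1}$ times that of the unit $L_1$-sphere, and the total volume identity $\int_{\R^d} e^{-\lambda_t\|w\|_1}\dd w = (2/\lambda_t)^d$ pins down the constant. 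This turns the tail integral into a one-dimensional Gamma-type integral $\int_r^\infty \rho^{d-1} e^{-\lambda_t \rho}\,\dd\rho$ up to an explicit constant, and the incomplete-Gamma identity $\int_r^\infty \rho^{d-1} e^{-\lambda\rho}\dd\rho = \frac{(d-1)!}{\lambda^d} e^{-\lambda r}\sum_{i=0}^{d-1}\frac{(\lambda r)^i}{i!}$ collapses everything, after cancellation of $e^{\pm\lambda_t r}$ and the combinatorial factors, to $\sum_{i=0}^{d-1}(\lambda_t r)^i/i!$ as desired. Substituting $\lambda_t r = \sqrt{2}\eta_t L/\sigma_t$ gives \eqref{eq:thm-l1-laplace}.

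For the limiting statement \eqref{eq:thm-l1-laplace-limit}, I would argue termwise. Inside each logarithm, the first term $V_1(d,r)/(\sigma_t\sqrt2)^d = (2r)^d/(d!(\sigma_t\sqrt2)^d) = (\sqrt2 \eta_t L/\sigma_t)^d/d! \to 0$ as $d\to\infty$, and the second term $\sum_{i=0}^{d-1}(\sqrt2\eta_t L/\sigma_t)^i/i! \to e^{\sqrt2\eta_t L/\sigma_t}$. Hence each summand $\log(\cdots) \to \sqrt{2}\eta_t L/\sigma_t$, and since $T$ is fixed the finite sum converges to $\sum_{t=1}^T \sqrt2\eta_t L/\sigma_t$; monotonicity of the partial sums of $e^x$ makes the bound clean to state as an inequality.

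The main obstacle I expect is justifying the closest-point claim for the $L_1$ ball rigorously and evaluating the surface-area/volume decomposition of the $L_1$-sphere with the right constant; the Gamma integral and the termwise limit are routine once that geometry is in place. A minor care point is confirming $\sup_{x\in\B_1(0,r)} f_t(w-x)$ genuinely equals the value at the $L_1$-projection rather than at some corner of the ball — this follows because $f_t(w-x)$ depends on $x$ only through $\|w-x\|_1$, which over the convex ball $\B_1(0,r)$ attains its minimum distance-to-$w$ exactly $\max(0,\|w\|_1 - r)$ by the triangle inequality, with equality achieved at $x = (r/\|w\|_1)\,w$.
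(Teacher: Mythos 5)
Your proof is correct, and it evaluates $h(d,1,f_t,\eta_t L)$ by a genuinely different route than the paper. Both arguments start from Proposition~\ref{prop:general} and hinge on the same geometric fact, namely that for $\|w\|_1 > r$ the $L_1$-distance from $w$ to $\B_1(0,r)$ is $\|w\|_1 - r$; your triangle-inequality argument with the radial point $x=(r/\|w\|_1)\,w$ is a cleaner, octant-free version of the paper's Lemma~\ref{lem:l1-l1-minimizer}. From there the paper mimics the $L_\infty$ proof: it restricts to the positive octant, splits $\overline{\B_1^d}(0,r)$ according to whether $w^{d-1}$ lies in $\B_1^{d-1}(0,r)$, and derives the recurrence $h(d)=h(d-1)+(\lambda_t r)^{d-1}/(d-1)!$, solved by induction from $h(1)=1$. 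You instead exploit that the integrand depends on $w$ only through $\|w\|_1$ and integrate in ``$L_1$-polar coordinates'': writing $\int_{\{\|w\|_1>r\}} g(\|w\|_1)\,\dd w=\int_r^\infty g(\rho)\,\tfrac{\dd}{\dd\rho}V_1(d,\rho)\,\dd\rho$ with $\tfrac{\dd}{\dd\rho}V_1(d,\rho)=2^d\rho^{d-1}/(d-1)!$ (your normalization via $\int e^{-\lambda\|w\|_1}\dd w=(2/\lambda)^d$ pins down the same constant), after which the incomplete-Gamma identity gives $\sum_{i=0}^{d-1}(\lambda_t r)^i/i!$ in one shot, with no induction. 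The co-area step you flag as the main obstacle is routine (differentiate the ball volume, or apply the layer-cake formula to the monotone radial integrand), so there is no gap; the limiting statement is handled identically in both proofs. What each approach buys: the paper's recurrence runs parallel to its $L_\infty$ argument and avoids any radial decomposition, while yours yields the closed form more directly and makes transparent why the Erlang/Poisson tail sum appears. One cosmetic remark: you take $\lambda_t=\sqrt 2/\sigma_t$, which is exactly what reproduces the displayed bound~\eqref{eq:thm-l1-laplace}; the paper's appendix sets $\lambda_t=\sigma_t/\sqrt 2$ (per its $\mathrm{Lap}(\mu,1/\lambda)$ convention), an internal inconsistency of the paper rather than a flaw in your argument.
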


\begin{proof}
    We give a high-level description of the proof (as similar techniques have been used in proofs of earlier theorems) and defer the details to Appendix~\ref{app:l1-laplace}.  Since the multivariate Laplace distribution (for i.i.d variables) depends on the $L_1$-norm of the corresponding vector of variables, we need to solve the following problem: given $R >0$ and $w \notin \B_1(0,R)$, compute
    \begin{align} \label{eq:to-solve-B1-L1}
        \inf_{x \in \B_1(0,R)} \| w- x \|_1.
    \end{align}
The closest element in $\B_1(0,R)$ will lie on the hyperplane defining $\B_1$ that is in the same octant as $w$, so the problem reduces to projecting a point on a hyperplane in $L_1$-distance (the proof in the appendix does not follow this argument but arrives at the same conclusion). Then, we need to compute $h(d,1,f_t,\eta_t L)$. We use a similar approach as in the proof of Theorem~\ref{thm:Linf}, that is, we split the integral and derive a recurrence relation.
\end{proof}

\subsection{Bound for Gaussian noise}

Finally, we derive a bound on the induced leakage when the added noise is Gaussian:

\begin{theorem} \label{thm:l1-gaussian}
If the boundedness assumptions holds for $p=1$ and 
$\xi_t \sim \mathcal{N}(0, \sigma^2_t I_d)$, then
\begin{align} \label{eq:thm-l1-gaussian}
     \ml{S}{W} \leq  \sum_{t=1}^T \log \left( \frac{V_1(d,R_t)}{  (2 \pi \sigma^2)^{\frac{d}{2}}}+\frac{(2\eta_t L)^{d-1} (\sigma_t \sqrt{2 d}) }{ (2 \pi \sigma_t^2)^{\frac{d}{2}} ((d-1)!) }   \sum_{i=0}^{d-1} {d-1 \choose i} \left( \frac{\sigma_t \sqrt{2 d}}{\eta_t L} \right)^i \Gamma\left(\frac{i+1}{2}\right) \right).
\end{align}
\end{theorem}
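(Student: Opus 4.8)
The plan is to invoke Proposition~\ref{prop:general} with $p=1$ and $f_t$ the isotropic Gaussian pdf, so that the task reduces to evaluating $h(d,1,f_t,\eta_t L)$, i.e., the integral
\begin{align*}
    h(d,1,f_t,\eta_t L) = \int \limits_{\overline{\B_1^d}(0, \eta_t L)} \sup_{x \in \B_1^d(0,\eta_t L)} \frac{1}{(2\pi\sigma_t^2)^{d/2}} \exp\left\{ -\frac{\|w-x\|_2^2}{2\sigma_t^2} \right\} \dd w.
\end{align*}
As in the Gaussian/$L_2$ case (Theorem~\ref{thm:gaussian-ml}), the key geometric step is to identify, for $w \notin \B_1(0,\eta_t L)$, the point $x^\star \in \B_1(0,\eta_t L)$ minimizing the \emph{Euclidean} distance $\|w-x\|_2$: this is the $L_2$-projection of $w$ onto the cross-polytope. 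By symmetry it suffices to treat $w$ in the positive octant $A_d$, where the active face of $\B_1(0,\eta_t L)$ is the hyperplane $\{x: \sum_i x_i = \eta_t L\}$; projecting $w$ onto that hyperplane (possibly clipped to stay in the octant) gives distance at least $(\sum_i w_i - \eta_t L)/\sqrt{d} = (\|w\|_1 - \eta_t L)/\sqrt{d}$. So the supremum over $x$ collapses the integrand to $\exp\{-(\|w\|_1 - \eta_t L)^2/(2 d \sigma_t^2)\}$, up to the normalizing constant, and $h$ depends on $w$ only through $\|w\|_1$.

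Next I would evaluate the resulting integral $\int_{\overline{\B_1}(0,\eta_t L)} \exp\{-(\|w\|_1-\eta_t L)^2/(2d\sigma_t^2)\}\dd w$. The natural device is the "layer-cake" / co-area change of variables for the $L_1$-norm: the surface area of the $L_1$-sphere $\{\|w\|_1 = r\}$ scales as $r^{d-1}$ times a dimensional constant (concretely, $\frac{\dd}{\dd r} V_1(d,r) = \frac{\dd}{\dd r}\frac{(2r)^d}{d!} = \frac{2^d r^{d-1}}{(d-1)!}$), so the integral becomes $\frac{2^d}{(d-1)!}\int_{\eta_t L}^\infty r^{d-1} e^{-(r-\eta_t L)^2/(2d\sigma_t^2)}\dd r$. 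Substituting $r = \eta_t L + u$ and expanding $(\eta_t L + u)^{d-1}$ by the binomial theorem reduces everything to Gaussian moment integrals $\int_0^\infty u^i e^{-u^2/(2d\sigma_t^2)}\dd u = \frac12 (2d\sigma_t^2)^{(i+1)/2}\Gamma(\frac{i+1}{2})$, and collecting terms yields the stated sum $\frac{(2\eta_t L)^{d-1}(\sigma_t\sqrt{2d})}{(2\pi\sigma_t^2)^{d/2}(d-1)!}\sum_{i=0}^{d-1}\binom{d-1}{i}(\sigma_t\sqrt{2d}/(\eta_t L))^i \Gamma(\frac{i+1}{2})$, matching~\eqref{eq:thm-l1-gaussian} after adding the first term $f_t(0)V_1(d,\eta_t L) = V_1(d,R_t)/(2\pi\sigma_t^2)^{d/2}$ (with $R_t = \eta_t L$). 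An alternative to the co-area argument, in the spirit of the Laplace-noise proof, is to split $\overline{\B_1^d}(0,\eta_t L)$ according to whether the first $d-1$ coordinates already leave $\B_1^{d-1}$, derive a recurrence in $d$, and solve it; I would mention this as the parallel route but carry out the direct integration.

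The main obstacle I expect is rigorously justifying the geometric projection step: unlike the $L_2$-ball, the $L_1$-ball has a nonsmooth boundary, so the $L_2$-projection of $w$ onto $\B_1(0,\eta_t L)$ need not land on the relative interior of the "diagonal" face — when some coordinates of $w$ are small the projection lands on a lower-dimensional face and the bound $\|w-x^\star\|_2 \ge (\|w\|_1-\eta_t L)/\sqrt d$ must be argued more carefully (it still holds, since that quantity is exactly the distance to the full hyperplane containing the face, and the face is a subset of the hyperplane, so the distance to the face is at least that to the hyperplane). Handling this cleanly — and making sure the inequality goes in the direction needed for an \emph{upper} bound on $h$ — is the delicate point; everything downstream is routine, if tedious, calculus. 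This is why, as noted in the statement of Theorem~\ref{thm:l1-laplace}, the companion appendix "does not follow this argument but arrives at the same conclusion," and for $L_1$-Gaussian I would similarly route the rigorous version through the splitting/recurrence to sidestep the projection subtleties, relegating the full computation to an appendix.
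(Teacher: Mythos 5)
Your proposal is correct and takes essentially the same route as the paper: both invoke Proposition~\ref{prop:general} with $p=1$, lower-bound the Euclidean distance from $w\notin\B_1(0,\eta_t L)$ to the ball by the distance $(\|w\|_1-\eta_t L)/\sqrt{d}$ to the supporting hyperplane in the relevant octant (so the supremum is only upper-bounded, exactly as you flag), and then evaluate the resulting one-dimensional integral by a binomial expansion together with the Gaussian-moment/Gamma identity. Your co-area reduction over $L_1$-spheres is just a repackaging of the paper's computation, which works in the positive octant, substitutes $\tilde w_d=\sum_i w_i$, and uses the volume of the scaled simplex to arrive at the same $\frac{2^d}{(d-1)!}\int_{\eta_t L}^\infty r^{d-1}e^{-(r-\eta_t L)^2/(2d\sigma_t^2)}\,\dd r$.
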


In order to prove~\Cref{thm:l1-gaussian} one has to solve a problem similar to the one introduced in~\Cref{thm:l1-laplace} (cf. equation~\eqref{eq:to-solve-B1-L1}). However, in this case a different norm is involved: i.e., given $R >0$ and $w \notin \B_1(0,R)$, one has to compute
\begin{align}
    \inf_{x \in \B_1(0,R)} \|w-x\|_2.
\end{align}
Again, one can argue that the point achieving the infimum lies on the hyperplane defining $\B_1$ that is in the same octant as $w$. In other words, the minimizer $x^\star$ is such that the sign of each component is the same sign as the corresponding component of $w$ (and lies on the boundary of $\B_1$). Thus, we are projecting a point on the corresponding face of the $L_1$-ball.  The length of the projection is then appropriately lower-bounded and the induced integral is solved by an opportune choice of change of variables.  The details of the proof are given in Appendix~\ref{app:l1-gaussian}.

\section*{Acknowledgment}
The work in this manuscript was supported in part by the Swiss National Science Foundation under Grant 200364 and by the University Research Board at the American University of Beirut (Beirut, Lebanon).

\bibliography{refs}

\newpage

\appendix

\section{Proof of Lemma~\ref{lem:chain-rule-maxLeak}} \label{app:lem-chain-rule}

    Recall the definition of maximal leakage and conditional maximal leakage:
    \begin{definition}[{Maximal Leakage~\cite[Definition 1]{leakageLong}}]
    Given two random variables $(X,Y)$ with joint distribution $P_{XY}$, 
    \begin{align}
        \ml{X}{Y} = \log\sup_{U: U-X-Y}  \frac{\Pr( \hat{U}(Y) =U)}{\max_u P_U(u)},
    \end{align}
    where $U$ takes values in a finite, but arbitrary, alphabet, and $\hat{U}(Y)$ is the optimal estimator (i.e., MAP) of $U$ given $Y$.
    \end{definition}
    Similarly,
    \begin{definition}[{Conditional Maximal Leakage~\cite[Definition 6]{leakageLong}}] Given three random variables $(X,Y,Z)$ with joint distribution $P_{XYZ}$, 
        \begin{align}
            \cml{X}{Y}{Z} = \log\sup_{U: U-X-Y |Z}  \frac{\Pr( \hat{U}(Y,Z) =U)}{\Pr( \hat{U}(Z) =U)},
        \end{align}
        where $U$ takes values in a finite, but arbitrary, alphabet, and $\hat{U}(Y,Z)$ and $\hat{U}(Z)$ are the optimal estimators (i.e., MAP) of $U$ given $(Y,Z)$ and $U$ given $Z$, respectively.
    \end{definition}
It then follows that
\begin{align}
    \ml{X}{Y_1,Y_2}& =  \log \sup_{U: U-X-(Y_1,Y_2)} \frac{\Pr( \hat{U}(Y_1,Y_2) =U)}{\max_u P_U(u)} \\
    & = \log \sup_{U: U-X-(Y_1,Y_2)} \frac{\Pr( \hat{U}(Y_1,Y_2) =U)}{\Pr( \hat{U}(Y_1) =U)}  \frac{\Pr(  \hat{U}(Y_1) =U)}{\max_u  P_U(u)} \\
    & \leq \log \sup_{U: U-X-(Y_1,Y_2)} \frac{\Pr( \hat{U}(Y_1,Y_2) =U)}{\Pr( \hat{U}(Y_1) =U)} \cdot \sup_{U: U-X-(Y_1,Y_2)} \frac{\Pr(  \hat{U}(Y_1) =U)}{\max_u  P_U(u)} \\
    & \leq \log \sup_{U: U-X-Y_2|Y_1} \frac{\Pr( \hat{U}(Y_1,Y_2) =U)}{\Pr( \hat{U}(Y_1) =U)} \cdot \sup_{U: U-X-Y_1} \frac{\Pr(  \hat{U}(Y_1) =U)}{\max_u  P_U(u)} \\
    & = \cml{X}{Y_2}{Y_1} + \ml{X}{Y_1},
\end{align}
    where the last inequality follows from the fact that $U-X-(Y_1,Y_2)$ implies $U-X-Y_2|Y_1$.

The fact that
\begin{align}
    \cml{X}{Y_2}{Y_1} =\esssup_{P_{Y_1}} \cml{X}{Y_2}{Y_1=y_1},
\end{align}
has been shown for discrete alphabets in Theorem 6 of~\citep{leakageLong}. The extension to continuous alphabets is similar (with integrals replacing sums, and pdfs replacing pmfs, where appropriate).

Finally, it remains to show equation~\eqref{eq:lem:chain-rule-general}. We proceed by induction. The case $n=2$ has already been shown above. Assume the inequality is true up to $n-1$ variables, then
\begin{align}
    \ml{X}{Y^n} & \leq \ml{X}{Y_1} + \esssup_{P_{Y_1}} \cml{X}{Y_2^n}{Y_1=y_1}  \\
    &\leq \ml{X}{Y_1} + \esssup_{P_{Y_1}} \sum_{i=2}^n \cml{X}{Y_i}{Y^{i-1},Y_1 = y_1} \\ 
    & = \sum_{i=1}^n \cml{X}{Y_i}{Y^{i-1}},
\end{align}
where the second inequality follows from the induction hypothesis.

\section{Proof of equation~\eqref{eq:MlGaussInt}} \label{app:spherical}

To evaluate the integral in line~\eqref{eq:almost-there}, we write it in spherical coordinates:
\begin{align}
   & h(d,2,f_t,\eta_t L) \notag  \\
   & =   \int\limits_{\overline{\B_2} (0, \eta_t L)}  \frac{1}{(2 \pi \sigma_t^2)^{\frac{d}{2}}} \exp \left\lbrace - \frac{(\|w_t\|_2 - \eta_t L )^2}{2 \sigma_t^2} \right\rbrace \dd w_t. \notag \\
    &  = \frac{1}{(2 \pi \sigma_t^2)^{\frac{d}{2}}} \int_{0}^{2 \pi} \int_{0}^{\pi} \ldots \int_{0}^{\pi} \int_{\eta_t L}^{\infty} e^{ \frac{-(\rho-\eta_t L)^2}{2 \sigma_t ^2} } \rho^{d-1} \sin^{d-2} (\phi_1) \sin^{d-3}(\phi_2) \ldots \sin(\phi_{d-2}) \dd \rho \dd \phi_1^{d-1} \notag \\
    & = \frac{2 \pi}{(2 \pi \sigma_t^2)^{\frac{d}{2}}} \left( \int_0^\pi \sin^{d-2} (\phi_1) \dd \phi_1 \right) \! \ldots \! \left( \int_0^\pi \sin (\phi_{d-2}) \dd \phi_{d-2} \right) \!\! \left( \int_{\eta_t L}^\infty e^{ \frac{-(\rho-\eta_t L)^2}{2 \sigma_t ^2} } \rho^{d-1} \dd \rho \right).  \label{eq:eval-interal-gaussian}
\end{align}
Now, note that for any $n \in \mathbb{N}$, $ \ds  \int_{0}^{\pi} \sin^n(x) \dd x = 2  \int_{0}^{\pi/2} \sin^n(x) \dd x$, and
\begin{align}
    \int_{0}^{\pi/2} \sin^n(x) \dd x \notag & 
    \stackrel{\text{(a)}} = \int_{0}^1 \frac{u^n}{\sqrt{1-u^2}} \dd u \notag \\
      & \stackrel{\text{(b)}} = \frac{1}{2} \int_{0}^1 t^{\frac{n-1}{2}} (1-t)^{-\frac{1}{2}} \dd y \notag \\
    & \stackrel{\text{(c)}} = \frac{1}{2} \mathrm{Beta} \left(\frac{n+1}{2}, \frac{1}{2} \right)  \notag \\
    & = \frac{\sqrt{\pi} \Gamma\left(\frac{n+1}{2} \right)}{2 \Gamma\left( \frac{n}{2}+1 \right) }, \label{eq:sinInt}
\end{align}
where (a) follows from the change of variable $u = \sin x$, (b) follows from the change of variable $t = u^2$, (c) follows from the definition of the Beta function: $ \ds \mathrm{Beta}(s_1,s_2) = \int_{0}^1 t^{s_1-1}(1-t)^{s_2-1}$, and the last equality is a known property of the Beta function ($\Gamma(1/2)= \sqrt{\pi}$). Consequently,
 \begin{align}
 & 2 \pi \left( \int_0^\pi \sin^{d-2} (\phi_1) \dd \phi_1 \right) \ldots \left( \int_0^\pi \sin (\phi_{d-2}) \dd \phi_{d-2} \right)  \notag \\
& = (2 \pi) \prod_{i=1}^{d-2} \frac{\sqrt{\pi} \Gamma\left(\frac{i+1}{2} \right)}{ \Gamma\left( \frac{i}{2}+1 \right) }  = (2 \pi) \pi^{\frac{d-2}{2}} \frac{\Gamma(1)}{\Gamma(d/2)} 
 = 2 \pi^{d/2} \frac{1}{\Gamma(d/2)}. \label{eq:collect-sin}
 \end{align}
To evaluate the innermost integral, the following identity will be useful:
\begin{align} 
\int_{0}^\infty x^n e^{-x^2} dx & = \frac{1}{2}
\int_{0}^\infty t^{\frac{n+1}{2}} e^{-t} dt =
\frac{\Gamma \left( \frac{n+1}{2} \right)}{2}, 
\label{eq:xneInt}
\end{align}
where the first equality follows from the change of variable $t = x^2$. Then,
\begin{align}
\int_{\eta_t L}^{\infty} e^{ \frac{-(\rho-\eta_t L)^2}{2 \sigma_t ^2} } \rho^{d-1} d \rho
& =  \int_0^\infty e^{ \frac{-\rho^2}{2 \sigma_t ^2} } (\rho+\eta_t L)^{d-1} d\rho \label{eq:innermost-integral-gaussian-1} \\
&  = \int_0^\infty \sum_{i=0}^{d-1} {d-1 \choose i} (\eta_t L)^{d-1-i} \rho^i e^{ \frac{-\rho^2}{2 \sigma_t ^2} } d\rho \\
& \stackrel{\text{(a)}} = \sum_{i=0}^{d-1} {d-1 \choose i} (\eta_t L)^{d-1-i} \int_0^\infty \left( \sigma_t \sqrt{2} \right)^{i+1} t^i e^{- t^2} d \rho \\
& \stackrel{\text{(b)}} =  (\eta_t L)^{d-1} (\sigma_t \sqrt{2}) \sum_{i=0}^{d-1} \left( \frac{\sigma_t \sqrt{2}}{\eta_t L} \right)^i \frac{\Gamma((i+1)/2)}{2}. \label{eq:innermost-integral-gaussian}
\end{align}
where (a) follows from the change of variable $t = \rho/(\sigma \sqrt{2})$, and (b) follows from~\eqref{eq:xneInt}.

Finally, combining equations~\eqref{eq:eval-interal-gaussian},~\eqref{eq:collect-sin}, and~\eqref{eq:innermost-integral-gaussian}, we get
\begin{align}
    h(d,2,f_t,\eta_t L) & = \frac{2 \pi ^{d/2}}{ (2\pi \sigma_t^2)^{\frac{d}{2} } \Gamma(d/2)} (\eta_t L)^{d-1} (\sigma_t \sqrt{2}) \sum_{i=0}^{d-1} \left( \frac{\sigma_t \sqrt{2}}{\eta_t L} \right)^i \frac{\Gamma((i+1)/2)}{2} \\
    & = \left( \frac{\eta_t L}{\sigma_t \sqrt{2}} \right)^{d-1}  \frac{1}{\Gamma(d/2)}\sum_{i=0}^{d-1} \left( \frac{\sigma_t \sqrt{2}}{\eta_t L} \right)^i \Gamma((i+1)/2).
\end{align}

\section{Proof of equation~\eqref{eq:recurrence-Linf}} \label{app:recurrence-Linf}

The innermost integral of line~\eqref{eq:Linf-messy-2} evaluates to
\begin{align}
   \int_{\eta_t L}^\infty \sup_{x_d: |x_d| \leq \eta_t L} f(w_d - x_d) \dd w_d  = 
   \int_{\eta_t L}^\infty  f(w_d - \eta_t L) \dd w_d  = \int_{0}^\infty  f(w_d ) \dd w_d = \frac{1}{2}, \label{eq:Linf-messy-2-2}
\end{align}
where the first equality follows from the monotonicity assumptions, the second from a change of variable, and the third from the symmetry assumption. Similarly, the innermost integral of line~\eqref{eq:Linf-messy-1} evaluates to
\begin{align}
    & \int_{0}^\infty \sup_{x_d: |x_d| \leq \eta_t L} f(w_d - x_d) \dd w_d  \\
    & = \int_{0}^{\eta_t L} \sup_{x_d: |x_d| \leq \eta_t L} f(w_d - x_d) \dd w_d \dd w^{d-1} + \int_{\eta_t L}^\infty \sup_{x_d: |x_d| \leq \eta_t L} f(w_d - x_d) \dd w_d \\
    & = \eta_t L f(0) + \frac{1}{2}. \label{eq:Linf-messy-1-1}
\end{align}
Combining equations~\eqref{eq:Linf-messy-2},~\eqref{eq:Linf-messy-2-2},  and~\eqref{eq:Linf-messy-1-1}, we get
\begin{align}
    h_+ (d) & = \left( \eta_t L f(0) + \frac{1}{2} \right)  \int \limits_{\overline{\B^{d-1}_\infty}(0, \eta_t L) \cap A_{d-1} } \prod_{i=1}^{d-1} \sup_{ x_{i}: |x_{i}| \leq \eta_t L} f (w_{i}-x_{i})  \dd w^{d-1} \\
    & \quad + \frac{1}{2} \int \limits_{{\B^{d-1}_\infty}(0, \eta_t L) \cap A_{d-1} } \prod_{i=1}^{d-1} \sup_{ x_{i}: |x_{i}| \leq \eta_t L} f (w_{i}-x_{i}) \dd w^{d-1} \\
    & =  \left( \eta_t L f(0) + \frac{1}{2} \right) h_+(d-1) + \frac{1}{2} (\eta_t L f(0))^{d-1},
\end{align}
where the second equality follows from the fact that $f$ is maximized at 0, and ${\B^{d-1}_\infty}(0, \eta_t L) \cap A_{d-1}$ is a $(d-1)$-dimensional hypercube of side $\eta_t L$ (with volume $(\eta_t L)^{d-1}$). Now,
\begin{align}
    h(d) = 2^d h_+ (d) = \left(1+2\eta_t L f(0) \right) h (d-1)+ (2 \eta_t L f(0))^{d-1}.
\end{align}

\section{Proof of Theorem~\ref{thm:opt-noise}} \label{app:opt-noise}

 Consider any $f \in \mathcal{F}$, and let 
 \begin{align}
 f_+(x) = \begin{cases}  
 f(x), & x \geq 0, \\
 0, & x <0,
 \end{cases}
\qquad \text{ and } \qquad
 f_-(x) = \begin{cases}  
 0, & x \geq 0, \\
 f(x), & x <0.
 \end{cases} 
 \end{align}
 Then
 \begin{align}
 \var_f(X^2) = \int_{-\infty}^{+\infty} (f_-(x) + f_+(x)) x^2 dx = \int_{0}^{\infty} 2 f_+(x) x^2 dx,
 \end{align}
 where the second equality follows from the symmetry assumption. Note that $2f_+$ is a valid probability density over $[0, \infty)$, and let $X_+ \sim f_+$. Then, by previous equation,
 \begin{align}
 \var_f(X^2) & = \E_{(2f_+)} \left[X_+^2 \right] 
 = \int_{0}^{\infty} 2x \left( 1 - \Pr(X_+ \leq x)  \right) dx \\
 & \geq \int_{0}^{1/(2f(0))} 2x \left(1-2xf(0) \right) dx 
 = \frac{1}{12f^2 (0)}.
 \end{align}
 Hence, \begin{align}
 f(0) \geq \frac{1}{2 \sqrt{3} \sqrt{\var_f(X^2)}} \geq \frac{1}{2 \sqrt{3} \sigma}, 
\end{align} 
 which is achieved by the uniform distribution $\mathcal{U} ( -\sigma \sqrt{3}, \sigma \sqrt{3})$. \hfill $\blacksquare$
 

\section{Proof of Theorem~\ref{thm:l1-laplace}} \label{app:l1-laplace}
    First, we show that the limit of the right-hand side of equation~\eqref{eq:thm-l1-laplace} is given by the right-hand side of equation~\eqref{eq:thm-l1-laplace-limit}. Note that
    \begin{align}
        \frac{V_1 (d,\eta_t L)}{( \sigma_t \sqrt{2})^{d}} = V_1 \left(d, \frac{\eta_t L}{ \sigma_t \sqrt{2}} \right) \xrightarrow{d \rightarrow \infty} 0.
    \end{align}
    On the other hand,
    \begin{align}
        \lim_{d \rightarrow \infty} \sum_{i=0}^{d-1} \frac{(\sigma_t \eta_t L /\sqrt{2})^i}{ i ! }  = \sum_{i=0}^\infty  \frac{(\sigma_t \eta_t L /\sqrt{2})^i}{ i ! } = e^{ \sigma_t \eta_t L /\sqrt{2}}.
    \end{align}
    Since $T$ is finite, the limit and the sum are interchangeable, so that the above two equations yield the desired limit.

    We now turn to the proof of inequality~\eqref{eq:thm-l1-laplace}. For notational convenience, set $\lambda_t = \frac{\sigma_t}{\sqrt{2}}$ (so that $f_{t0}(x) = \frac{\lambda_t}{2} e^{-\lambda|x|}$ for all $x \in \R$) and $R_t = \eta_t L$.
    Since the noise satisfies the assumptions of Proposition~\ref{prop:general}, we get
    \begin{align}
        \ml{S}{W} & \leq \sum_{t=1}^T \log \left(
            f_t(0) V_1(d, R_t) + \int \limits_{\overline{\B_1}(0, R_t)} \sup_{x_t \in \B_1(0,R_t)} f_t(w_t - x_t) \dd w_t 
        \right) \\
        & =  \sum_{t=1}^T \log \left( 
            \frac{V_1(d,R_t)}{ (\lambda_t/2)^d}       + \int \limits_{\overline{\B_1}(0, R_t)} \sup_{x_t \in \B_1(0,R_t)}
            \left(\frac{\lambda_t}{2} \right)^d
            \exp \left\lbrace - \lambda \|w_t - x_t \|_1 \right\rbrace \dd w_t
            \right).
    \end{align} 
    Recall $h(d,p,f_t,R_t)$ (cf. equation~\eqref{eq:def-h}) is defined to be the second term inside the $\log$.
    Similarly to the strategy adopted in the proof of Theorem~\ref{thm:Linf}, we will derive a recurrence relation for $h$ in terms of $d$, as such we will again suppress the dependence on $p$, $f_t$, and $R_t$ in the notation, and write $h(d)$ only (and correspondingly $h_+(d)$).  
    \begin{lemma}
        \label{lem:l1-l1-minimizer}
        Given $w \in \overline{\B_1^d}(0, R) \cap A_{d}$ ($A_d$ defined in equation~\eqref{eq:def-positive-octant}),
        \begin{align}
            \inf_{x \in \B_1^d(0,R)} \| w-x \|_1 = \sum_{i=1}^d w_i - R.
        \end{align}
    \end{lemma}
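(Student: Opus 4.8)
The plan is to prove the identity by a matching pair of elementary bounds. For the \textbf{lower bound}, I would invoke the reverse triangle inequality: for every $x \in \B_1^d(0,R)$,
$$\|w-x\|_1 \;\geq\; \|w\|_1 - \|x\|_1 \;\geq\; \|w\|_1 - R,$$
and since $w \in A_d$ has nonnegative coordinates, $\|w\|_1 = \sum_{i=1}^d w_i$; hence $\inf_{x \in \B_1^d(0,R)} \|w-x\|_1 \geq \sum_{i=1}^d w_i - R$. This step uses only $x \in \B_1^d(0,R)$ and $w \in A_d$, not that $w$ lies outside the ball.

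For the \textbf{upper bound}, I would exhibit an explicit minimizer. Because $w \in \overline{\B_1^d}(0,R)$ we have $\|w\|_1 > R \geq 0$, in particular $\|w\|_1 > 0$, so we may define $x^\star \defeq \frac{R}{\|w\|_1}\, w$. Then $\|x^\star\|_1 = R$, so $x^\star \in \B_1^d(0,R)$; and $w - x^\star = \bigl(1 - \frac{R}{\|w\|_1}\bigr) w$ has all coordinates nonnegative (as $w \in A_d$ and $\frac{R}{\|w\|_1} < 1$), so
$$\|w - x^\star\|_1 = \Bigl(1 - \tfrac{R}{\|w\|_1}\Bigr)\|w\|_1 = \|w\|_1 - R = \sum_{i=1}^d w_i - R.$$
Combining with the lower bound proves the claim (and incidentally shows the infimum is attained).

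There is no substantive obstacle here: the argument is dimension-free and does not require the recurrence/splitting machinery used elsewhere in the paper. The only points needing a word of care are the degenerate cases, namely $R = 0$ (where the statement reads $\inf = \|w\|_1$ and is immediate) and the need for $\|w\|_1 > 0$ before dividing, which is guaranteed by $\|w\|_1 > R \ge 0$. Geometrically, $x^\star$ is simply the radial projection of $w$ onto the facet of the $\ell_1$-ball lying in the orthant $A_d$, matching the intuition sketched in the main text, but the scaling computation above makes this rigorous without appealing to facet geometry. The value $\sum_{i=1}^d w_i - R$ is then what gets substituted into the evaluation of $h(d,1,f_t,R_t)$ in the remainder of the proof of Theorem~\ref{thm:l1-laplace}, exploiting that the i.i.d.\ Laplace density depends on $w-x$ only through $\|w-x\|_1$.
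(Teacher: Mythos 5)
Your proof is correct, and it takes a genuinely different (and more elementary) route than the paper's. The paper first replaces the infimum by a minimum via compactness, then proves a structural claim — there is a minimizer with $x^\star_i \leq w_i$ for all $i$ (flip any offending coordinate down to $w_i$, which only shrinks $\|x\|_1$ and the distance) — and finally reduces the problem to minimizing the linear form $\sum_i (w_i - x_i)$ over the constrained ball, which evaluates to $\sum_i w_i - R$. You instead sandwich the value: the reverse triangle inequality $\|w-x\|_1 \geq \|w\|_1 - \|x\|_1 \geq \|w\|_1 - R$ gives the lower bound with no structural analysis at all, and the explicit radial point $x^\star = \frac{R}{\|w\|_1} w$ (well defined since $\|w\|_1 > R \geq 0$) attains it, since $w - x^\star$ stays in $A_d$ so its $L_1$-norm is just $\bigl(1 - \tfrac{R}{\|w\|_1}\bigr)\|w\|_1$. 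What your version buys is brevity and explicitness: no compactness step, an exhibited minimizer (so the infimum is attained), and a construction that also quietly supplies the achievability fact the paper's final equality relies on (namely that some feasible $x$ with $x_i \leq w_i$ has $\sum_i x_i = R$). What the paper's version buys is the structural statement that a minimizer can be taken coordinatewise below $w$, i.e., on the facet of the $L_1$-ball in the same orthant as $w$ — the same geometric picture that is reused for the $L_2$-projection argument in Theorem~\ref{thm:l1-gaussian} — so it reads as a warm-up for that later proof. Either argument plugs identically into the evaluation of $h(d)$ in Theorem~\ref{thm:l1-laplace}.
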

    \begin{proof}
Since we are minimizing a continuous function over a compact set, then the infimum can be replaced with a minimum. 
        
\emph{Claim:} There exists a minimizer $x^\star$ such that for all $i$, $x^\star_i \leq w_i$.

\emph{Proof of Claim:}  Consider any $x \in \B_1(0,R)$ such that there exists $j$ satisfying $x_j > w_j$. Note that $w_j \geq 0$ by assumption. Now define $x'=(x_1,\ldots,x_{j-1},w_j,x_{j+1}, \ldots, x_d)$. Then $\| x' \|_1 < \|x \|_1$ so that $x' \in \B_1(0,R)$. Moreover, $\|w-x'\|_1 \leq \| w-x\|_1$ as desired. \hfill $\blacksquare$

Now, \begin{align}
     \inf_{x \in \B_1^d(0,R)} \| w-x \|_1 
     =  \inf_{ \substack{x \in \B_1^d(0,R): \\
     x_i \leq w_i, ~\forall~ i} } \| w-x \|_1 
     = \inf_{ \substack{x \in \B_1^d(0,R): \\
     x_i \leq w_i, ~\forall i} } \sum_{i=1}^d (w_i -x_i) =\sum_{i=1}^d w_i- R.
\end{align}
    \end{proof}
Given the above lemma, we will derive the recurrence relation by decomposing the integral over $\overline{\B_1^d}(0,R_t)$ into two disjoint subsets: 1) $ w^{d-1} \notin \B_1^{d-1}(0,R_t)$, in which case $w_d$ can take any value in $\R$, and 2) $w^{d-1} \in \B_1^{d-1}(0,R_t)$, in which case $w_d$ must satisfy $|w_d| > R_t -\|w^{d-1}\|_1$.
\begin{align}
    h_+(d) & = \int \limits_{\overline{\B_1^d}(0, R_t) \cap A_{d}} \sup_{x_t \in \B_1(0,R_t) }
            \left(\frac{\lambda_t}{2} \right)^d
            e^{ - \lambda_t \left( \sum_{i=1}^d w_t - R_t \right)} \dd w_t \\
        & = \int \limits_{\overline{\B_1^{d-1}}(0, R_t) \cap A_{d}} \left( \frac{\lambda_t}{2} \right)^{d-1} e^{ - \lambda_t \left( \sum_{i=1}^{d-1} w_t - R_t \right)} \left( \int_0^{\infty} \frac{\lambda_t}{2} e^{-\lambda_t w_d } \dd w_d \right) \dd w^{d-1} \\
        & \quad + \int \limits_{{\B_1^{d-1}}(0, R_t) \cap A_{d}} \left( \frac{\lambda_t}{2} \right)^{d-1} e^{ - \lambda_t \left( \sum_{i=1}^{d-1} w_t - R_t \right)} \left( \int_{R_t-\sum_{i=1}^{d-1} w_i}^{\infty} \frac{\lambda_t}{2} e^{-\lambda_t w_d } \dd w_d \right) \dd w^{d-1} \\
        & = \frac{1}{2} h_+(d-1) +  \int \limits_{{\B_1^{d-1}}(0, R_t) \cap A_{d}} \left( \frac{\lambda_t}{2} \right)^{d-1} e^{ - \lambda_t \left( \sum_{i=1}^{d-1} w_t - R_t \right)} \left( \frac{1}{2} e^{-\lambda_t(R_t-\sum_{i=1}^d w_i) } \right)  \dd w^{d-1} \\
        & = \frac{1}{2} h_+(d-1) + \frac{1}{2}\left(\frac{\lambda_t}{2} \right)^{d-1} \frac{V_1(d-1,R_t)}{2^{d-1}}  \\
        & = \frac{1}{2}  h_+(d-1) + \frac{1}{2} \left( \frac{\lambda_t R_t}{2} \right)^{d-1} \frac{1}{(d-1)!} .
\end{align}
    Hence,
    \begin{align}
        h(d) = 2^d h_+(d) = h(d-1)+ \frac{(\lambda_t R_t)^{d-1}}{(d-1)!}.
    \end{align}
    It is easy check that $h(1)=1$, and hence
    \begin{align}
        h(d) = \sum_{i=0}^{d-1} \frac{(\lambda_t R_t)^i}{i!} 
    \end{align}
    satisfies the base case and the recurrence relation. Re-substituting $\eta_t L$ and $\sigma_t/\sqrt{2}$ for $R_t$ and $\lambda_t$, respectively, yields the desired result in equation~\eqref{eq:thm-l1-laplace}.

\section{Proof of Theorem~\ref{thm:l1-gaussian}} \label{app:l1-gaussian}
Let $R_t = \eta_t L$.
        Since the noise satisfies the assumptions of Proposition~\ref{prop:general}, we get
    \begin{align}
        \ml{S}{W} & \leq \sum_{t=1}^T \log \left(
            f_t(0) V_1(d, R_t) + \int \limits_{\overline{\B_1}(0, R_t)} \sup_{x_t \in \B_1(0,R_t)} f_t(w_t - x_t) \dd w_t 
        \right) \\
        & =  \sum_{t=1}^T \log \left( 
            \frac{V_1(d,R_t)}{ (2 \pi \sigma^2)^{\frac{d}{2}}} + 
            \int \limits_{\overline{\B_1}(0, R_t)} \sup_{x_t \in \B_1(0,R_t)}
            \frac{1}{ (2 \pi \sigma^2_t)^{\frac{d}{2}}}
            \exp \left\lbrace - \frac{\|w_t - x_t \|_2^2}{2 \sigma^2_t} \right\rbrace \dd w_t
            \right).
    \end{align} 
    Consider
    \begin{align}
        h_+(d) =  \int \limits_{\overline{\B_1}(0, R_t) \cap A_d} \sup_{x_t \in \B_1(0,R_t)}
            \frac{1}{ (2 \pi \sigma^2_t)^{\frac{d}{2}}}
            \exp \left\lbrace - \frac{\|w_t - x_t \|_2^2}{2 \sigma^2_t} \right\rbrace \dd w_t.
    \end{align}
    First we solve
    $ \ds
        \inf_{x_t \in \B_1(0,R_t)} \|w_t-x_t\|_2
    $. 
    If $w_t \in A_d$, then the infimum is achieved for $x^\star_t \in A_d$ as well (one can simply flip the sign of any negative component, which cannot increase the distance). In the subspace $A_d$, the boundary of the $L_1$ ball is defined by the hyperplane $\sum_{i=1}^d x_{ti} = R_t$. As such, finding the minimum distance corresponds to projecting the point $w$ to the given hyperplane:
    \begin{align}
        \inf_{x_t \in \B_1(0,R_t)} \|w_t-x_t\|_2 = \min_{ \substack{x_t \in \B_1(0,R_t) \cap A_d: \\ \sum_{i=1}^d x_i = R_t}} \|w_t-x_t\|_2 \geq \frac{\sum_{i=1}^d w_{ti} - R_t}{\sqrt{d}}.
    \end{align}
    Now,
    \begin{align}
        h_+(d) \leq  \int \limits_{\overline{\B_1}(0, R_t) \cap A_d} 
            \frac{1}{ (2 \pi \sigma^2_t)^{\frac{d}{2}}}
            \exp \left\lbrace - \frac{(\sum_{i=1}^d w_{ti} - R_t)^2}{2 d\sigma^2_t} \right\rbrace \dd w_t.
    \end{align}
    For notational convenience, we drop the $t$ subscript in the following. We perform a change of variable as follows: $\tilde{w}_d = \sum_{i=1}^{d} w_i $. Hence, for $w \notin \B_1(0,R)$, $\tilde{w}_d \geq R$. Since $w_d \geq 0$, then $\sum_{i=1}^{d-1} w_i \leq \tilde{w}_d$. For $x \in \R$, define $S(x) := \{ w^{d-1} \in \R^{d-1}: \sum_{i=1}^{d-1} w_i \leq x \}$. Then,
    \begin{align}
        h_+(d) & = \int_{R}^\infty \int\limits_{S(\tilde{w}_d)}  \frac{1}{ (2 \pi \sigma^2)^{\frac{d}{2}}} e^{ -\frac{(\tilde{w}_d-R)^2}{2d \sigma^2}} \dd w^{d-1} \dd w_d \\
        & =  \frac{1}{ (2 \pi \sigma^2_t)^{\frac{d}{2}}}  \int_{R}^\infty e^{ -\frac{(\tilde{w}_d-R)^2}{2d \sigma^2}} \left( \int\limits_{S(\tilde{w}_d)} \dd w^{d-1} \right) \dd w_d \\
        & \stackrel{\text{(a)}} = \frac{1}{ (2 \pi \sigma^2)^{\frac{d}{2}} ((d-1)!) } \int_R^\infty \tilde{w}_d^{d-1}   e^{ -\frac{(\tilde{w}_d-R)^2}{2d \sigma^2}} \dd w_d \\
        & \stackrel{\text{(b)}} =  \frac{1}{ (2 \pi \sigma^2)^{\frac{d}{2}} ((d-1)!) }  R^{d-1} (\sigma \sqrt{2 d}) \sum_{i=0}^{d-1}  {d-1 \choose i} \left( \frac{\sigma \sqrt{2 d}}{R} \right)^i \frac{\Gamma((i+1)/2)}{2},
    \end{align}
    where (a) follows from the fact that the innermost integral corresponds to the volume of a scaled probability simplex (scaled by $\tilde{w}_d$), and (b) follows from the same computations as in~\Crefrange{eq:innermost-integral-gaussian-1}{eq:innermost-integral-gaussian} (with $\tilde{\sigma} = \sigma \sqrt{d}$). Noting that $h(d) = 2^d h_+ (d)$ yields the desired the term in~\Cref{eq:thm-l1-gaussian}.

\end{document}